\newcommand{\Ecal}{\mathcal{E}}
\newcommand{\Mcal}{\mathcal{M}}
\newcommand{\R}{\mathbb{R}}
\newcommand{\F}{\operatorname{F}}
\newcommand{\Q}{\operatorname{Q}}
\newcommand{\dec}{\operatorname{int}}
\newcommand{\bin}{\operatorname{bin}}
\newcommand{\sgn}{\operatorname{sgn}}
\newcommand{\hs}{\operatorname{hs}}
\newcommand{\bo}{\boldsymbol{0}}
\newcommand{\unitlayer}[3]{ 
	\framebox{\tikzset{node distance=.4cm, auto}
		\begin{tikzpicture}[scale=0.9, every node/.style={transform shape}]
		\tikzstyle{neuron}=[circle, draw=black, inner sep=.01cm, minimum size = .55cm]
		
		\foreach \name / \i in {1,...,#2}
		\node[neuron] (I-\name) at (\i,0) {$#1_{\i}$};
		
		\node (I-dots) [node distance = .8cm, right of = I-#2] {$\cdots$};
		\node[neuron] (I-end)  [node distance = .8cm, right of = I-dots] {${#1}_{#3}$};
		
		\end{tikzpicture}
	}}
\title[Universal Approximation by Stochastic Feedforward Networks]{Universal Approximation of Markov Kernels by\\Shallow Stochastic Feedforward Networks}
\begin{document}

\maketitle
\thispagestyle{empty}
\begin{abstract}
We establish upper bounds for the minimal number of hidden units for which a binary stochastic feedforward network with sigmoid activation probabilities and a single hidden layer is a universal approximator of Markov kernels. We show that each possible probabilistic assignment of the states of $n$ output units, given the states of $k\geq1$ input units, can be approximated arbitrarily well by a network with $2^{k-1}(2^{n-1}-1)$ hidden units. 
\end{abstract}

\begin{keywords}
universal approximation, stochastic feedforward network, Markov kernel
\end{keywords}

\section{Introduction}
The universal approximation capabilities of feedforward networks with one hidden layer of computational units have been studied in numerous papers and have been established under quite general conditions on the activation functions and the input-output domains~\citep{Cybenko, Hornik89, leshno1993multilayer, ChenChen1995,GallantWhite1988}. 
Some works have also studied the minimal size of universal approximators and the quality of the approximations when the networks have only a limited number of hidden units~\citep{Hornik:1991:ACM:109691.109700,Barron1993, Wenzel:2000:HAS:361159.361171}. 

In the context of feedforward networks, the universal approximation question most commonly refers to the approximation of deterministic functions. 
In this paper we address a related problem that has received a bit less attention. 
We study the universal approximation of stochastic functions (Markov kernels) and the minimal number of hidden units in a stochastic feedforward network that is sufficient for this purpose. 
For a network with $k$ input binary units and $n$ output binary units, 
we are interested in maps taking inputs from $\{0,1\}^k$ to probability distributions over outputs from $\{0,1\}^n$. 
The outputs of the network are length-$n$ binary vectors, but the outputs of the stochastic maps are length-$2^n$ probability vectors. 
We focus on shallow networks, with one single hidden layer, as the one illustrated in Figure~\ref{figure:structure}, 
and stochastic binary units with output $1$ probability given by the sigmoid of a weighted sum of their inputs. 
Given the number of input and output units, $k$ and $n$, what is the smallest number of hidden units $m$ that suffices to obtain a universal approximator of stochastic maps? We show that this is not more than $2^{k-1}(2^{n-1} -1)$. 
We also consider the case where the weights of the output layer are fixed in advance and only the weights of the hidden layer are tunable. In that setting we show that $2^{k-1}(2^n-1)$ hidden units are sufficient. 

Some previous works have discussed compact representations of stochastic maps by feedforward networks, 
but focusing on the approximation of probability distributions (instead of Markov kernels) and deep networks with many hidden layers~\citep{Sutskever:2008, LeRoux:2010:DBN:1836204.1836214,Montufar2011}. 

This paper is organized as follows. 
Section~\ref{section:settings} contains basic definitions and notations, as well as a few comments about the deterministic setting. 
Section~\ref{section:results} presents our main results. 
Section~\ref{section:proof1} contains our analysis of the minimal number of hidden units for a network can approximate any stochastic function arbitrarily well by tuning the input weights and biases of the first layer, while the weights and biases of the second layer are kept fixed. 
Section~\ref{section:proof2} contains a corresponding analysis for the case when input weights and biases of both layers are tuned. 
Section~\ref{section:conclusion} offers our conclusions and outlook.

\section{Settings}
\label{section:settings}

This section contains definitions and basic observations. 

\subsection{Probability Distributions and Markov Kernels}
Throughout this paper $k$, $m$, and $n$ denote finite natural numbers. 
We denote by $\{0,1\}^n$ the set of length-$n$ binary strings. 
This is the set of all possible configurations of $n$ binary units. 
The set of probability distributions over $\{0,1\}^n$ is given by 
\begin{equation}
	\Delta_n :=\Big\{ p \in \R^{\{0,1\}^n}\colon p(x) \geq 0, \sum_{x\in\{0,1\}^n} p(x) = 1 \Big\}.
\end{equation} 
This is the $(2^n-1)$-dimensional simplex of all vectors in $2^n$-dimensional Euclidean space with non-negative entries and $1$-norm equal to $1$. 
The set of strictly positive distributions is the relative interior of $\Delta_n$, denoted by $\Delta_n^+$. 

A Markov kernel with source $\{0,1\}^k$ and target $\{0,1\}^n$ is a map from $\{0,1\}^k$ to $\Delta_n$. 
The set of all such kernels is 
\begin{equation*}
	\Delta_{k,n} 
	:= \Big\{ P\in \R^{\{0,1\}^k\times\{0,1\}^n} \colon P(y;x) \geq 0, \sum_{x\in\{0,1\}^n} P(y;x) = 1\; \forall y\in\{0,1\}^k \Big\}. 
\end{equation*}  
Each Markov kernel is written as a matrix $P$ with entries $P(x|y)\equiv P(y;x)$ for all $x\in \{0,1\}^n$, for all $y\in\{0,1\}^k$. 
The $y$-th row $P(\cdot|y)$ is a probability distribution of the output $x$, given the input $y$. 
The set $\Delta_{k,n}$ is the $2^k(2^n-1)$-dimensional polytope of $2^k\times 2^n$ row-stochastic matrices, 
which is the \mbox{$2^k$-th} Cartesian power of $\Delta_n$.

\subsection{Feedforward Stochastic Networks}

We consider stochastic binary units of the following form. 
Consider the sigmoid function 
\begin{equation*}
	\sigma\colon \R\to [0,1]; \; a\mapsto \frac{1}{1+\exp(-a)}.  
\end{equation*}
For each input vector $y\in\{0,1\}^k$, the unit computes a scalar pre-activation value by an affine function $v^\top y + b$, 
and outputs $1$ with probability $\Pr(1)=\sigma(v^\top y + c)$ or otherwise it outputs $0$ with complementary probability, $\Pr(0)=1-\sigma(v^\top y + c)$. 

Given an input vector $y\in\{0,1\}^k$, 
the probability that a feedforward layer of $m$ stochastic units outputs $z = (z_1,\ldots, z_m)^\top \in\{0,1\}^m$ is given by the product of the output probabilities of the individual units in the layer, 
\begin{align*}
	\Pr(z) 
	=& \prod_{j\in[m]} \sigma(v_j^\top y + c_j)^{z_j} (1 - \sigma(v_j^\top y + c_j))^{1-z_j}  \\ 
	=& \frac{\exp( (V y + c)^\top z)}{\sum_{z'\in\{0,1\}^m}\exp( (V y + c)^\top z')}, 
\end{align*}
where $V = (v_1 | \cdots | v_m)^\top\in\R^{m\times n}$ and $c=(c_1,\ldots, c_m)^\top\in \R^m$. 

\begin{definition}
	We denote by $\F_{k,m}\subseteq\Delta_{k,m}$ the set of Markov kernels that can be represented by a feedforward layer with $k$ input units and $m$ output units; that is, the set of kernels 
	\begin{gather*}
		P(z|y) = \frac{\exp( (V y + c)^\top z)}{\sum_{z'}\exp( (V y + c)^\top z')}\quad\forall z\in\{0,1\}^m, y\in\{0,1\}^k, 
	\end{gather*}
	parametrized by $V\in\R^{m\times k}$ and $c\in\R^m$. 
\end{definition}

\begin{definition}
	We denote by $\F_{k,m,n} = \F_{m,n}\circ \F_{k,m} \subseteq\Delta_{k,n}$ the set of Markov kernels that can be represented 
	by a feedforward network with $k$ input units, $m$ hidden units, and $n$ output units; that is, the set of kernels of the from 
	\begin{equation*}
		P(x|y) = \sum_{z\in\{0,1\}^m} Q(z|y) R(x|z) \quad \forall y\in\{0,1\}^k, x\in\{0,1\}^n , 
	\end{equation*}
	where $Q\in\F_{k,m}$ and $R\in\F_{m,n}$. 
\end{definition}

Fig.~\ref{figure:structure} gives a schematic illustration of a feedforward network with one input, one hidden, and one output layer.

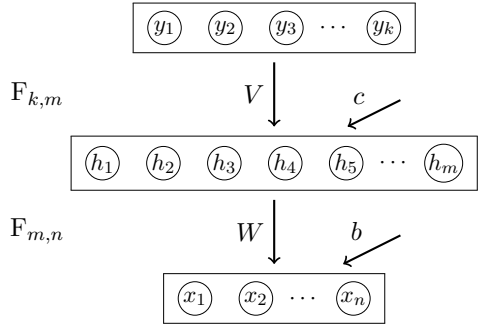
\begin{figure}
	\centering
	\tikzset{node distance=1.5cm, auto}
	\begin{tikzpicture}[scale=0.9, every node/.style={transform shape}]
	\node (L-0) at (0,0) {\unitlayer{x}{2}{n}}; 
	\node (L-1) at (0,2) {\unitlayer{h}{5}{m}}; 
	\node (L-2) at (0,4) {\unitlayer{y}{3}{k}}; 
	
	\draw[<-, line width = .8pt] (L-0) to node {$W$} (L-1);
	\draw[<-, line width = .8pt] (L-1) to node {$V$}  (L-2);
	
	\node (c) at (2,3) { };
	\draw[<-, line width = .8pt] (L-1) to node {$c$} (c);
	
	\node (b) at (2,1){ };
	\draw[<-, line width = .8pt] (L-0) to node {$b$} (b);
	
	\node (Q) at (-3.5,3) {$\F_{k,m}$};
	\node (R) at (-3.5,1) {$\F_{m,n}$};
	
	\end{tikzpicture}
	\caption{Feedforward network with a layer of $k$ input units, a layer of $m$ hidden units, and a layer of $n$ output units.}
	\label{figure:structure}
\end{figure}

The following set of probability distributions will be important in our analysis. 

\begin{definition}
	We denote by $\Ecal_m \subseteq\Delta_m$ the set of probability distributions on $\{0,1\}^m$ of the form 
	\begin{equation*}
		p(z) 
		= \frac{\exp(b^\top z)}{\sum_{z'\in\{0,1\}^m}\exp(b^\top z')}\quad\forall z\in\{0,1\}^m, 
	\end{equation*}
	parametrized by $b\in\R^m$. 
	This is precisely the set of probability distributions that factorize as 
	\begin{equation*}
		p(z) = \prod_{j\in[m]}p_j^{1-z_j} (1-p_j)^{z_j} \quad\forall  z=(z_1,\ldots, z_m)^\top\in\{0,1\}^m, 
	\end{equation*}
	for some $p_j\in (0,1)$ for all $j\in [m]$. 
\end{definition}

Note that each kernel $Q\in F_{k,m}$ represented by a feedforward layer is a tuple of $2^k$ factorizing probability distributions, $Q(\cdot|y)\in \Ecal_{m}$ for all $y\in\{0,1\}^k$. 

\subsection{Universal Approximation}

\begin{definition}
	A set $\Mcal\subseteq\Delta_{k,n}$ is a universal approximator if and only if every point from $\Delta_{k,n}$ can be approximated arbitrarily well by points from $\Mcal$. 
	This is the case if and only if $\overline{\Mcal}=\Delta_{k,n}$, 
	where $\overline{\Mcal}$ denotes the closure of $\Mcal$ in the Euclidean topology. 
\end{definition}

We will study the universal approximation properties of $\F_{k,m,n}$ in two cases.  
In the first case, only the first layer has free parameters, whereas the second layer has fixed parameters. 
This means that we consider the set $R\circ \F_{k,m}\subseteq\Delta_{k,n}$ for some fixed $R\in\F_{m,n}$.  
In the second case, all parameters are free. 

By comparing the number of free parameters and the dimension of $\Delta_{k,n}$, 
it is straightforward to obtain the following crude lower bound on the minimal number of hidden units that suffices for universal approximation: 
\begin{proposition}
	\label{proposition:minimal}
	Let $k\geq 1$ and $n\geq 1$. 
	\begin{itemize}
		\item 
		If there is an $R\in \overline{\F_{m,n}}$ with $R\circ \overline{\F_{k,m}} = \Delta_{k,n}$, 
		then $m\geq \lceil\frac{1}{(k+1)}2^k(2^n-1)\rceil$. 
		\item
		If $\overline{\F_{k,m,n}} = \Delta_{k,n}$, then $m\geq \lceil\frac{1}{(n+k+1)}(2^k(2^n-1)-n)\rceil$. 
	\end{itemize}
\end{proposition}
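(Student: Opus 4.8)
The plan is to prove both inequalities by comparing the number of free real parameters with $\dim\Delta_{k,n}=2^k(2^n-1)$: a family of kernels carved out by $N$ real parameters through a smooth (indeed real-analytic) map cannot be dense in a polytope of dimension exceeding $N$. First I would record the parameter counts. A kernel in $\F_{k,m}$ is determined by $V\in\R^{m\times k}$ and $c\in\R^m$, i.e.\ by $m(k+1)$ reals; a kernel in $\F_{m,n}$ by a weight matrix in $\R^{n\times m}$ and a bias in $\R^n$, i.e.\ by $n(m+1)$ reals. Composing, $\F_{k,m,n}=\F_{m,n}\circ\F_{k,m}$ is the image of $\R^N$ with $N=m(k+1)+n(m+1)=m(n+k+1)+n$ under the real-analytic map sending all weights and biases to the resulting kernel $P$.

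The heuristic behind the bound is Sard's theorem: if $N<2^k(2^n-1)$, then for any smooth map of $\R^N$ into the $(2^k(2^n-1))$-dimensional affine hull of $\Delta_{k,n}$ every point is critical, so the image is Lebesgue-null and cannot be all of $\Delta_{k,n}$. The subtlety is that both statements concern \emph{closures}: the first assumes $R\circ\overline{\F_{k,m}}=\Delta_{k,n}$ and the second $\overline{\F_{k,m,n}}=\Delta_{k,n}$, and a null set may well be dense. So the measure argument alone does not suffice; I would instead argue at the level of dimension, for which the relevant fact is $\dim\overline{\Mcal}=\dim\Mcal$ for the sets $\Mcal$ at hand, together with $\dim\Mcal\leq N$.

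Granting that, the second bullet is immediate: $\dim\overline{\F_{k,m,n}}=\dim\F_{k,m,n}\leq N=m(n+k+1)+n$, while $\overline{\F_{k,m,n}}=\Delta_{k,n}$ forces this to equal $2^k(2^n-1)$; rearranging gives $m(n+k+1)\geq 2^k(2^n-1)-n$ and hence $m\geq\lceil\frac{1}{n+k+1}(2^k(2^n-1)-n)\rceil$. For the first bullet I would first note that, since $\overline{\F_{k,m}}$ is a closed (hence compact) subset of the polytope $\Delta_{k,m}$ and $R\circ(\cdot)$ is linear, one has $R\circ\overline{\F_{k,m}}=\overline{R\circ\F_{k,m}}$; thus the hypothesis again says a family with $m(k+1)$ parameters is dense in $\Delta_{k,n}$. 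The same dimension count then gives $m(k+1)\geq 2^k(2^n-1)$ and $m\geq\lceil\frac{1}{k+1}2^k(2^n-1)\rceil$.

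The routine part is the parameter bookkeeping. The one step needing genuine justification is $\dim\overline{\Mcal}=\dim\Mcal\leq N$, which is false for arbitrary sets; I would secure it from the analytic structure of these families. Concretely, $\F_{k,m}$, $\F_{m,n}$ and their composite are definable in the o-minimal structure $\R_{\exp}$ (the activation probabilities being built from $\exp$), and in any o-minimal structure a definable set has a dimension that is bounded by the dimension of any parametrizing domain and is unchanged by passing to the closure. This is the technical crux; the rest is the elementary counting above.
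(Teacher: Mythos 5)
Your proof is correct and takes essentially the same approach as the paper, which states the proposition without further proof as ``straightforward'' from comparing the number of free parameters ($m(k+1)$ for the first bullet, $m(n+k+1)+n$ for the second) with $\dim\Delta_{k,n}=2^k(2^n-1)$. Your o-minimality argument (definability in $\R_{\exp}$, so that the dimension of a definable image is bounded by that of the parameter domain and is preserved under closure) rigorously supplies the one step the paper leaves implicit, namely that a low-dimensional parametrized family cannot even be \emph{dense} in a higher-dimensional polytope.
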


\subsection{Feedforward Deterministic Networks}

Deterministic networks are special cases of stochastic networks. 
Consider a feedforward stochastic network as defined above, but where all input weights and biases, $W$ and $b$, are multiplied by $r\in\R$. 
For generic choices of $W$ and $b$, when $r\to\infty$ each unit outputs $0$ or $1$ with probability one, depending on its inputs. 
In this case, the kernels represented by the feedforward network are deterministic, 
meaning that they have the form 
$P(x|y) = \delta_{f(y)}(x)$ for all $x\in\{0,1\}^n$, for all $y\in\{0,1\}^k$, for some function $f\colon \{0,1\}^k\to \{0,1\}^n$. 
The feedforward networks defined by these limits are called linear threshold networks. 

The representation of Boolean functions by linear threshold networks (with one binary output unit) has been studied extensively in the literature. 
The problem can be beautifully described as the problem of classifying subsets of vertices of the $k$-dimensional unit cube by an arrangement of oriented affine hyperplanes. 
In~\citep{Wenzel:2000:HAS:361159.361171} it was shown that, for $k\geq 2$ and $n=1$, the smallest number $m$ of hidden units for which a linear threshold network with one hidden layer can compute every Boolean function satisfies $2^{k/2} - \frac{k^2}{2} < -\frac{k^2}{2} + \sqrt{\frac{k^4}{4} +2^k} \leq m \leq \frac{3}{k+2} 2^k$. 

It is important to realize that, in the deterministic setting, 
if a network with $m$ hidden units can represent any Boolean function $f\colon \{0,1\}^k \to \{0,1\}$, 
then a network with $n\cdot m$ hidden units can represent any function $g\colon \{0,1\}^k \to \{0,1\}^n$. 
This is because one can always write $g = (f_1,\ldots, f_n)$ and compute the individual $f_i$'s in parallel groups of hidden and output units. 
In the stochastic setting the same is not true. 
In general, a joint distribution over $\{0,1\}^n$ cannot be written in terms of $n$ marginal distributions over $\{0,1\}$ alone, and so, the activities of the individual output units cannot be computed independently from each other.

\section{Results}
\label{section:results}

We bound the minimal number of hidden units of a universal approximator from above in two cases. 
In the first case we consider a network whose second layer has fixed weights and biases. 
In the second case all weights and biases are free parameters.   

\subsection{Fixed Weights in the Second Layer}

\begin{theorem}
	\label{theorem:first}
	Let $k\geq 1$ and $n\geq 1$. 
	There is an $R\in\overline{\F_{m,n}}$ such that $R\circ \overline{\F_{k,m}} = \Delta_{k,n}$, whenever $m\geq \frac{1}{2} 2^k(2^n-1)$. 
\end{theorem}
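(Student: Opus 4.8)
The plan is to fix a suitable ``decoder'' $R\in\overline{\F_{m,n}}$ and show that, by tuning only the first layer, the composite kernel can be driven to any target $P\in\Delta_{k,n}$, row by row. Since $\Delta_{k,n}$ is the $2^k$-fold Cartesian power of $\Delta_n$, it suffices to realize, for each input $y$, an arbitrary distribution $P(\cdot|y)\in\Delta_n$, while making sure that the $2^k$ rows can be controlled through the single shared parametrization $(V,c)$ of the first layer.

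The first ingredient is a \emph{stick-breaking} description of $\Delta_n$ by product distributions. I would group the hidden units into blocks of $2^n-1$ units and show that, as the product distribution on a single block ranges over all of $\overline{\Ecal_{2^n-1}}$, a fixed deterministic read-out of the block covers all of $\Delta_n$ in the closure. Concretely, ordering the $2^n$ outputs and sending a hidden configuration to the output indexed by its first active unit turns the independent activation probabilities into conditional ``stick lengths'' $q_i$, so that output $i$ is produced with probability $(\prod_{l<i}(1-q_l))q_i$; letting the $q_i$ range over $[0,1]$ yields every point of $\Delta_n$.

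The factor $\frac{1}{2}=2^{k-1}/2^k$ should come from \emph{pairing} inputs and sharing blocks. I would partition $\{0,1\}^k$ into the $2^{k-1}$ pairs $\{(0,w),(1,w)\}$, $w\in\{0,1\}^{k-1}$, and assign one block of $2^n-1$ units to each pair. The point is that a single hidden unit can realize two prescribed activation probabilities, one for $(0,w)$ and one for $(1,w)$, while being essentially inactive for every other input: choosing its weight on the last $k-1$ coordinates as a large ``conjunction detector'' for the pattern $w$ drives its pre-activation to $-\infty$ unless $y_{2:k}=w$, and a weight on the first coordinate then selects between the two target probabilities. Running this over all $2^n-1$ units of the block realizes, in the closure, the two required stick-breaking distributions for the two inputs of the pair, with all other blocks held inactive.

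The main obstacle is realizing the decoder $R$ as a genuine element of $\overline{\F_{m,n}}$, i.e.\ as a \emph{single} feedforward layer: such a kernel makes the $n$ output bits conditionally independent given the hidden state, and in the deterministic limit forces each output bit to be a single linear threshold function of the hidden configuration. The naive first-active-unit read-out is not of this form for $n\ge 2$ (a function of the first-active position need not be a threshold, and no assignment of the $2^n$ distinct output strings to positions can make all $n$ bits simultaneously monotone along one chain). The crux of the argument is therefore to choose the ordering of the outputs, the assignment of units to blocks, and the global read-out across blocks so that (i) the induced partition of $\{0,1\}^m$ into output-events is cut out by a hyperplane arrangement, hence threshold-decodable by one layer; (ii) inactive, all-zero blocks are absorbed into the correct default output and do not corrupt the active block; and (iii) the product-measure-of-events map still covers all of $\Delta_n$. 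Once a fixed $R$ with these three properties is exhibited, the theorem follows by combining the per-pair constructions and passing to the closure.
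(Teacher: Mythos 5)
Your architecture is exactly the paper's: pair the inputs $\{(0,w),(1,w)\}$, give each pair its own block of $2^n-1$ hidden units driven by a shared ``conjunction detector'' plus a selector weight (this is the paper's Lemma~\ref{lemma:firstlayer} and Proposition~\ref{proposition:firstla}), and decode each block by a position-of-the-active-unit read-out whose stick-breaking probabilities $q_i\prod_{l<i}(1-q_l)$ sweep out all of $\Delta_n$ (the paper's Proposition~\ref{proposition:univdetmap}). But your proof stops precisely at the step carrying the real content: you never exhibit the decoder $R$, you only list properties (i)--(iii) it must have and call its existence ``the crux.'' Worse, your diagnosis of the obstacle is wrong. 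You claim the naive read-out with assignment $i\mapsto\bin_n(i)$ cannot be computed by $n$ threshold units because the bits of $\bin_n(i)$ alternate, hence are not ``monotone along the chain'' of positions. Threshold functions of the hidden configuration $z\in\{0,1\}^N$ are under no such monotonicity constraint in the position index, and the paper's Lemma~\ref{lemma:geometricclassification} shows the naive read-out \emph{is} single-layer decodable: take bias $b=-(1,\ldots,1)^\top$ and give the unit at position $l$ the weight column $2^{l+1}\bigl(\bin_n(l)-\tfrac12\bigr)$. Each entry of that column has magnitude $2^l$, while the interference from all lower-position active units plus the bias is at most $\sum_{j<l}2^j+1<2^l$ in each coordinate, so the highest-index active unit dominates the sign: $\sgn(Wz+b)=\sgn\bigl(\bin_n(l(z))-\tfrac12\bigr)$, and scaling $(W,b)$ by $\alpha\to\infty$ gives the deterministic kernel $Q(\cdot|z)=\delta_{\bin_n l(z)}(\cdot)$ in the closure. (Your ``first active unit'' convention is the same construction after reversing the order of the units within a block.)

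Once this exponential-weight trick is in hand, your remaining worries dissolve. Property (ii) is automatic: an all-zero block contributes nothing to the pre-activation, so one simply replicates the same $n\times(2^n-1)$ weight block across all $2^{k-1}$ blocks of the hidden layer; whichever block the input pair activates is then decoded correctly, and the all-zero configuration decodes to $\bin_n(0)$, which is exactly the output carrying the residual stick weight. Property (iii) is your own stick-breaking computation, and the covering of $\Delta_n^+$ (hence of $\Delta_n$ in closure) follows from the quotients $q_i/q_0$ being freely adjustable. So the gap is concrete and single: without the explicit construction of $W$ and $b$ (or some substitute for it), no element $R\in\overline{\F_{m,n}}$ with your properties has been produced and the theorem is not proved; with it, your argument becomes the paper's proof of Theorem~\ref{theorem:first}.
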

In view of the crude lower bound from Proposition~\ref{proposition:minimal}, 
this upper bound is tight at least when $k=1$.  

When there are no input units, $k=0$, we may set $\F_{0,m} =\Ecal_m$ and $\Delta_{0,n} =\Delta_n$. 
The theorem generalizes to this case as: 
\begin{proposition}
	\label{proposition:zero}
	Let $n\geq 2$. 
	There is an $R\in\overline{\F_{m,n}}$ with $R\circ \overline{\Ecal_m}=\Delta_n$, whenever $m\geq 2^{n}-1$. 
\end{proposition}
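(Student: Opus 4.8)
The plan is to approximate an arbitrary target distribution $p \in \Delta_n$ by a mixture of simple distributions realized through the hidden layer. The key observation is that when $k = 0$, the hidden layer $\overline{\Ecal_m}$ can realize, in the limit $r \to \infty$, any point mass $\delta_z$ on a chosen hidden state $z \in \{0,1\}^m$, as well as any probability vector supported on the $2^m$ hidden states that factorizes — but more usefully, it can realize mixtures concentrated on any prescribed subset of coordinate-indicator configurations. My strategy is to fix the second-layer kernel $R \in \overline{\F_{m,n}}$ so that the $m = 2^n - 1$ hidden units are indexed by the $2^n - 1$ nonempty outputs (equivalently, the nonzero elements of $\{0,1\}^n$), and to arrange $R$ so that hidden unit $j$ being ``on'' deterministically produces output $x_j$, while the all-zeros hidden configuration produces a fixed reference output, say $\bo \in \{0,1\}^n$.

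First I would construct $R$ explicitly as a limit of feedforward kernels. I want $R(\cdot \mid z)$ to place all its mass on a single output determined by $z$; concretely, I would design the output weights $W$ and biases $b$ so that the indicator vector $e_j$ (only unit $j$ active) maps to output $x_j$, and I would enumerate $x_1,\dots,x_{2^n-1}$ as the $2^n - 1$ distinct nonzero binary strings of length $n$ so that, together with $\bo$, all of $\{0,1\}^n$ is covered. Because the construction only needs the behavior of $R$ on the singleton configurations $\{e_j\}_{j} \cup \{\bo\}$, I would push the hidden-layer distribution $Q \in \overline{\Ecal_m}$ toward a distribution supported on exactly these $m+1$ configurations, making the off-singleton contributions vanish in the limit.

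Second, I would show that the attainable mixing weights on these singleton configurations sweep out all of $\Delta_n$. For a factorizing distribution with marginals $p_j \in (0,1)$, taking all $p_j$ simultaneously small makes the all-zeros configuration dominate, while selectively raising a single $p_j$ shifts mass onto $e_j$; by tuning the $m = 2^n - 1$ marginals independently one can realize, in the closure, an arbitrary distribution over the $m$ singletons $e_j$ together with $\bo$. Composing with the fixed $R$ then yields an arbitrary target $p \in \Delta_n$, since the $m + 1$ outputs $\{x_j\} \cup \{\bo\}$ exhaust $\{0,1\}^n$ and the mixing weights are free. This gives $\overline{\F_{m,n}} \circ \overline{\Ecal_m} = \Delta_n$ with $m = 2^n - 1$, and the hypothesis $n \geq 2$ guarantees $m \geq 3 > 1$ so the singleton-plus-reference scheme has enough room.

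The main obstacle I anticipate is the simultaneous control of the hidden-layer marginals: a factorizing distribution cannot place arbitrary mass on the singletons $e_j$ while keeping mass off the higher-weight configurations, so I cannot literally attain a point in the interior of the singleton simplex at finite parameters. The careful part of the argument is the limiting scheme — I would let the marginals scale so that the total mass escaping onto configurations of Hamming weight $\geq 2$ tends to zero, while the ratios among the singleton masses and the all-zeros mass converge to the prescribed target coefficients. Verifying that this limit lies in $\overline{\Ecal_m}$ and that the induced output distribution converges to the desired $p$, uniformly enough to conclude the closure statement, is where the real work lies; the rest is bookkeeping on the enumeration of outputs.
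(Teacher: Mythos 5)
There is a genuine gap, and it sits exactly where you yourself said ``the real work lies'': the limiting scheme you propose for the hidden layer is impossible. The closure $\overline{\Ecal_m}$ consists only of product distributions --- it is the image of the compact cube $[0,1]^m$ under the continuous parametrization $(t_1,\ldots,t_m)\mapsto \prod_j t_j^{z_j}(1-t_j)^{1-z_j}$, so taking the closure only lets the marginals reach $0$ or $1$; it does \emph{not} contain arbitrary limit points such as a mixture spread over the singletons $\{e_j\}_j$ and $\bo$. Concretely, every $q\in\overline{\Ecal_m}$ satisfies the independence identity $q(e_i+e_j)\,q(\bo)=q(e_i)\,q(e_j)$ for all $i\neq j$. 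Hence if, along your limiting sequence, the ratios $q(e_i)/q(\bo)$ and $q(e_j)/q(\bo)$ converge to positive constants $\mu_i/\mu_0$ and $\mu_j/\mu_0$, then necessarily $q(e_i+e_j)/q(\bo)\to \mu_i\mu_j/\mu_0^2>0$: the mass on Hamming-weight-$\geq 2$ configurations is of the same order as the mass you keep, and cannot vanish. A distribution supported on $\{\bo\}\cup\{e_j\}_j$ with three or more strictly positive weights simply does not lie in $\overline{\Ecal_m}$, and since $n\geq 2$ forces $m\geq 3$, such hidden-layer distributions are exactly what your argument needs. Consequently the composition with your $R$ is polluted by uncontrolled mass on higher-weight hidden states (which your choice of $W,b$ sends to entrywise ANDs $x_i\wedge x_j$, etc.), and the claimed convergence to an arbitrary target $p\in\Delta_n$ fails.

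The paper's construction avoids this obstruction precisely by not using singleton fibers. Its second-layer kernel is $Q(\cdot|z)=\delta_{\bin_n l(z)}$, where $l(z)$ is the index of the highest active hidden unit, so \emph{every} hidden configuration is assigned to some output and no mass has to be killed off. Under a product distribution with marginals $p_j$ the fiber masses telescope, $q_i=(1-p_i)\prod_{j>i}p_j$ and $q_0=\prod_{j>0}p_j$, giving $q_i/q_0=\frac{1-p_i}{p_i}\cdot\frac{1}{\prod_{j=1}^{i-1}p_j}$; these ratios can be tuned to arbitrary positive values, so $Q\circ\Ecal_N=\Delta_n^+$ holds exactly at finite parameters and the closure yields all of $\Delta_n$. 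To salvage your route you would have to choose $R$ so that the full fiber structure (not just the singletons) carries the target mass and then analyze all fiber probabilities of a product measure --- which is essentially the paper's argument.
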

This bound is always tight, since the network uses exactly $2^n-1$ parameters to approximate every distribution from $\Delta_{n}$ arbitrarily well. 

\subsection{Changeable Weights in the Second Layer}
When both layers have changeable weights we obtain: 
\begin{theorem}
	\label{theorem:second}
	Let $k\geq 1$ and $n\geq 2$. 
	Then $\overline{\F_{k, m, n}}  = \Delta_{k,n}$, whenever $m\geq 2^{k-1}(2^{n-1}-1)$. 
\end{theorem}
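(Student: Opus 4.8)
The plan is to build the approximator in two stages: reuse the input-side construction from the proof of Theorem~\ref{theorem:first} to reduce to a single input bit, and then supply a new output-side construction that exploits the now-tunable second layer to halve the cost in $n$. Since $\overline{\F_{k,m,n}}$ is closed, I would first reduce to a dense target set and assume every row $P(\cdot|y)$ of the target kernel lies in the relative interior $\Delta_n^+$; this lets me freely pass to limits of parameters.

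Second, I would inherit the input-sharing mechanism already used for Theorem~\ref{theorem:first}. Partition the $2^k$ inputs into $2^{k-1}$ pairs (say pairing $y$ with $y\oplus e_1$) and allocate to each pair a disjoint block of $2^{n-1}-1$ hidden units. I would choose the first-layer weights so that, for any input $y$, the units of the block owning the pair containing $y$ receive finite, tunable pre-activations, while the units of every other block are driven to $-\infty$ and hence output the fixed configuration $\bo$ in the limit. Because a block frozen at $\bo$ contributes only the constant $W^{(b')}\bo=0$ to the output pre-activation, the second-layer weights of distinct blocks decouple: when block $b$ is the active one, the output kernel depends only on $W^{(b)}$, the shared output bias $b_{\mathrm{out}}$, and the hidden distribution produced inside block $b$. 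This reduces the problem block by block to the case $k=1$, two inputs distinguished by $y_1$ feeding a single tunable decoder on $2^{n-1}-1$ units, the only residual coupling being the shared bias $b_{\mathrm{out}}$, which I would absorb into a common reference output configuration used by all blocks.

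Third, and this is the heart of the argument, I would solve the $k=1$ case. Here the first layer realizes an arbitrary pair of product distributions $(q^0,q^1)=(Q(\cdot|0),Q(\cdot|1))\in\Ecal_{m}\times\Ecal_{m}$ over the $2^{n-1}-1$ hidden states, since for the two input values the natural parameters $c$ and $c+V$ can be chosen independently. I would pair the output space $\{0,1\}^n$ into the $2^{n-1}$ pairs $\{x,x\oplus e_n\}$ and design the decoder $R$ so that, read through the hidden configuration, the composed mixture realizes the family ``arbitrary mass on each of the $2^{n-1}$ pairs together with a within-pair Bernoulli split,'' which is a universal approximator of $\Delta_n$. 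To convert $2^{n-1}-1$ hidden units into arbitrary mixing weights over the $2^{n-1}$ pairs I would use the sequential probability-transfer (``staircase'') construction familiar from the analysis of deep belief networks and RBMs~\citep{LeRoux:2010:DBN:1836204.1836214,Montufar2011}: the tail-sum parametrization of a product distribution realizes, in the limit, any weight vector on $m+1=2^{n-1}$ components, and the two inputs supply two independent such weight vectors through $q^0$ and $q^1$.

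The main obstacle I anticipate is precisely the point where the two inputs share one decoder. The within-pair splits are a property of the components $R(\cdot|z)$ and are therefore common to both inputs, whereas two arbitrary targets $p^0,p^1$ generally require different within-pair conditionals; a degrees-of-freedom count confirms that splits fixed by the decoder alone are insufficient for two inputs. The construction must therefore let the last output bit depend on the active hidden units in a way that the input-dependent hidden distribution can steer, so that both the pair masses and the input-dependent splits are read off consistently from the same $2^{n-1}-1$ units while all cross-terms and ``don't-care'' contributions vanish in the limit. I expect to control this with the same limiting estimates that justify the probability-transfer construction. Once the $k=1$ case is established in the closure, assembling the $2^{k-1}$ decoupled blocks yields $\overline{\F_{k,m,n}}=\Delta_{k,n}$ for every $m\ge 2^{k-1}(2^{n-1}-1)$.
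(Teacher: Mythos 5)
Your plan reproduces the paper's architecture faithfully: the input-pairing and block-freezing construction (Lemma~\ref{lemma:firstlayer}, Proposition~\ref{proposition:firstla}), the decoupling of second-layer weights across blocks, and the output-side idea of grouping the $2^n$ output states into $2^{n-1}$ pairs whose masses come from a staircase/tail-sum product distribution on $2^{n-1}-1$ hidden units. But the proof stops exactly where the theorem's real content begins. You correctly identify the crux --- the two inputs of a pair share one decoder, so the within-pair splits must be steerable by the input-dependent hidden distribution rather than fixed by the decoder --- and then you defer it: ``I expect to control this with the same limiting estimates that justify the probability-transfer construction.'' Those estimates only produce the pair masses; they say nothing about input-dependent splits. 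The paper's answer to this crux is Lemma~\ref{lemma:deterministicgeometric2} together with Proposition~\ref{proposition:sendod}: append one extra output row $(\mu_1,\ldots,\mu_N)$ so that hidden state $z$ decodes to $\lambda_z\delta_{\bin_n 2l(z)}+(1-\lambda_z)\delta_{\bin_n (2l(z)+1)}$ with weights $\lambda_z$ depending on all of $z$, not only on $l(z)$; re-weighting the states inside a level set $Z_l=\{z\colon l(z)=l\}$ by the input-dependent product distribution then moves the effective split. This mechanism, which is the entire point of Section~\ref{section:proof2}, is absent from your proposal, so what you have is a plan with a hole at its center.

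Moreover, the hole cannot be closed by ``the same limiting estimates,'' because the obstacle you flagged is genuinely load-bearing. First, in any decoder of the contemplated form, a hidden state that is alone in its level set gets a split that is a property of the decoder only, and $Z_0=\{\bo\}$ and $Z_1=\{e_1\}$ are singletons; hence both inputs of a pair are forced to produce identical ratios $q_0/q_1$ and $q_2/q_3$, a closed condition violated robustly by generic targets, and the input-dependent re-weighting has nothing to act on inside a singleton. Second, in the smallest case $(k,n)=(1,2)$, $m=1$, no construction at all can work: every kernel in $\overline{\F_{1,1,2}}$ has rows $q^y=R_0+s_y(R_1-R_0)$, $y=0,1$, with $R_0,R_1\in\overline{\Ecal_2}$ and $s_y\in[0,1]$; writing $D(p)=p_{00}p_{11}-p_{01}p_{10}$ and using $D(R_0)=D(R_1)=0$, one gets $D(q^y)=-s_y(1-s_y)D(R_1-R_0)$ and $D(q^1-q^0)=(s_1-s_0)^2D(R_1-R_0)$, so every representable pair satisfies $D(q^y)\,D(q^1-q^0)\le 0$, an inequality that survives closure; yet the pair $q^0=(0.35,0.3,0.15,0.2)$, $q^1=(0.2,0.1,0.35,0.35)$ (coordinates ordered $00,01,10,11$) violates it strictly and therefore cannot be approximated. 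You should also know that the paper itself is exposed at the very point you flagged: Proposition~\ref{proposition:sendod} is a single-input statement ($\Q_n^N\circ\Ecal_N=\Delta_n^+$), and the one-line deduction of Theorem~\ref{theorem:second} from Propositions~\ref{proposition:firstla} and~\ref{proposition:sendod} never addresses two inputs sharing one decoder. So your diagnosis of where the difficulty sits is exactly right; what is missing --- in your proposal, and in fact in the paper's own argument --- is a proof that it can be overcome with $2^{k-1}(2^{n-1}-1)$ hidden units.
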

Comparison with the crude lower bound from Proposition~\ref{proposition:minimal} reveals that this bound is tight at least for 
$(k,n)= (1,2), (2,2),(3,2), (1,3)$.

In the case of no inputs we obtain: 
\begin{proposition}
	Let $n\geq2$. Then $\overline{F_{m,n}}\circ \overline{\Ecal_m} = \Delta_n$, whenever $m\geq 2^{n-1}-1$. 
\end{proposition}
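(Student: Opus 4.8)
The plan is to recognize the composite family as a family of mixtures of product distributions and then to split the statement into one representational and one combinatorial lemma. A kernel $R\in\F_{m,n}$ sends each hidden configuration $z\in\{0,1\}^m$ to the product distribution $R(\cdot\mid z)\in\Ecal_n$ whose natural parameter is $c+Vz$, and a prior $p\in\Ecal_m$ is itself a product distribution on $\{0,1\}^m$; hence the composite kernel is the single distribution
\[
  q(x)=\sum_{z\in\{0,1\}^m} p(z)\,R(x\mid z),
\]
a mixture of the $2^m$ product distributions $R(\cdot\mid z)$ with product-structured mixing weights $p(z)$. The reverse inclusion $\overline{\F_{m,n}}\circ\overline{\Ecal_m}\subseteq\Delta_n$ is immediate, so only $\Delta_n\subseteq\overline{\F_{m,n}}\circ\overline{\Ecal_m}$ needs proof. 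I would reduce this to: (i) every mixture of $m+1$ \emph{arbitrary} product distributions from $\Ecal_n$ lies in $\overline{\F_{m,n}}\circ\overline{\Ecal_m}$; and (ii) every target in $\Delta_n$ is a mixture of $2^{n-1}$ product distributions. With $m+1=2^{n-1}$, i.e.\ $m\ge 2^{n-1}-1$, these combine to give the claim.

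For (ii) I would fix the perfect matching of the cube $\{0,1\}^n$ that pairs $x$ with $x\oplus e_n$; its $2^{n-1}$ edges are indexed by $u\in\{0,1\}^{n-1}$, with edge $u=\{(u,0),(u,1)\}$. For a target $q$, set the $u$-th mixture weight to $q(u,0)+q(u,1)$ and the $u$-th component to the product distribution supported on edge $u$ with the matching conditional split. A product distribution supported on a single edge is obtained by making the first $n-1$ Bernoulli marginals deterministic (a boundary point of $\Ecal_n$) and leaving the last marginal free, so it lies in $\overline{\Ecal_n}$; edges carrying zero mass and degenerate splits are absorbed by passing to the closure. This writes $q$ as a mixture of $2^{n-1}$ elements of $\overline{\Ecal_n}$ and is comparatively routine.

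The crux is (i): realizing arbitrary mixture weights \emph{and} arbitrary components in a single limit with only $m$ hidden units. For the weights I would assign the $m+1$ components to the blocks $Z_0,\dots,Z_m$ of $\{0,1\}^m$ defined by the index of the highest active hidden unit (with $Z_0=\{0\}$); a product prior $p$ then gives block masses $p(Z_i)$ of stick-breaking form, which can be tuned to any probability vector $(\mu_0,\dots,\mu_m)$. For the components I would scale the second-layer weights $V$ so that, within each block, the output parameter $c+Vz$ converges to the natural parameter of the intended product component. The main obstacle is that these blocks are not subcubes on which a linear map can be held exactly constant: one must let the coordinates that encode \emph{which} component is selected saturate to $\pm\infty$ (so their limiting output bits are deterministic and block-independent) while the remaining coordinate, which carries the free part of the component, stays finite and constant across the block, and one must simultaneously ensure that configurations with two or more active units are routed into an already-present component rather than creating spurious ones. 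Carrying out this joint limit — choosing a priority encoding in $V$ and matching it to the edge decomposition of (ii) so that exactly the $2^{n-1}$ wanted edge-components survive with the wanted weights — is the delicate step; the weight assignment and the combinatorial decomposition are then straightforward by comparison.
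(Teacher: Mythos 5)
Your overall architecture coincides with the paper's: your matching-edge decomposition in (ii) corresponds to the output pairs $\{\bin_n(2l),\bin_n(2l+1)\}$, and your blocks $Z_0,\dots,Z_m$ with stick-breaking masses are exactly the sets $\{z\colon l(z)=l\}$ used in Propositions~\ref{proposition:univdetmap} and~\ref{proposition:sendod}. Step (ii) is indeed routine. The genuine gap is in step (i), and it is not merely a deferred computation: the mechanism you sketch cannot be carried out. The blocks $Z_l$ are not faces of the cube; on $Z_l$ the bits $z_j$, $j<l$, are free, so the single non-saturated output coordinate $z\mapsto \mu_0+\mu_l+\sum_{j<l}\mu_j z_j$ is constant on $Z_l$ only if $\mu_j=0$ for all $j<l$. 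Demanding this for every block forces $\mu_1=\cdots=\mu_{N-1}=0$, which leaves at most two distinct within-edge splits among the $N+1$ blocks, so the splits cannot be prescribed independently. Your broader claim (i), for \emph{arbitrary} product components, is also not available as a fallback: a configuration with two active units, say $z$ with $z_i=z_j=1$, is forced to have output parameter $c+v_i+v_j$, which is determined by the parameters already assigned to the wanted components and hence cannot be ``routed into an already-present component'' when those components are arbitrary; and its prior weight cannot be suppressed either, since a product prior ties the block masses multiplicatively (already for $m=2$ one has $p(11)=p(10)\,p(01)/p(00)$, so sending $p(11)\to 0$ kills a wanted weight as well).

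The resolution --- which is the actual content of the paper's proof, Lemma~\ref{lemma:deterministicgeometric2} together with Proposition~\ref{proposition:sendod} --- is to abandon pointwise constancy and control only aggregates. One saturates $n-1$ output coordinates via the scaled map of Lemma~\ref{lemma:geometricclassification}, so that block $Z_l$ is sent into the edge $\{\bin_n(2l),\bin_n(2l+1)\}$, and keeps one finite row $(\mu_1,\dots,\mu_N)$ and bias $\mu_0$. The within-edge weights $\lambda_z$ then genuinely vary over each block (the paper calls them ``not mutually independent''), but the only quantities that must be controlled are the $p$-weighted ratios $q_{2l}/q_{2l+1}=\sum_{z\in Z_l}\lambda_z p(z)\,\big/\sum_{z\in Z_l}(1-\lambda_z)p(z)$; tuning $\mu_0,\mu_1,\dots,\mu_N$ sequentially --- changing $\mu_l$ has no effect on the blocks $Z_j$ with $j<l$ --- and invoking the intermediate value theorem, each such ratio sweeps all of $(0,\infty)$. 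Combined with your stick-breaking control of the edge masses $q_{2l}+q_{2l+1}$, this proves the proposition. In short, your decomposition is the right one, but the step you flagged as ``the delicate step'' is the entire theorem, and it must be established by this aggregate continuity argument rather than by exact constancy across blocks.
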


\subsection{Outline of the Proof}
Our strategy for proving Theorem~\ref{theorem:first} and Theorem~\ref{theorem:second} can be summarized as follows: 

\begin{itemize}
	\item 
	First we show that the first layer of $\F_{k,m,n}$ can approximate Markov kernels arbitrarily well, 
	which fix the state of some units, depending on the input, 
	and have an arbitrary product distribution over the states of the other units. 
	The idea is illustrated in Fig.~\ref{figure:proofidea}. 
	\item 
	Then we show that the second layer can approximate deterministic kernels arbitrarily well, whose rows are copies of all point measures from $\Delta_n$, ordered in a good way with respect to the different inputs. 
	Note that the point measures are the vertices of the simplex $\Delta_n$. 
	\item 
	Finally, we show that the set of product distributions of each block of hidden units is mapped to the convex hull of the rows of the kernel represented by the second layer, which is $\Delta_n$. 
	\item 
	The output distributions of distinct sets of inputs is modeled individually by distinct blocks of hidden units and so we obtain the universal approximation of Markov kernels. 
\end{itemize}

The goal of our analysis is to construct the individual pieces of the network as compact as possible. 
In particular, Lemma~\ref{lemma:firstlayer} will provide a trick that allows us to use each block of units in the hidden layer for a pair of distinct input vectors at the same time. 
This allows us to halve the number of hidden units that would be needed if each input had an individual block of active hidden units. 
Similarly, Lemma~\ref{lemma:deterministicgeometric2} will provide a trick for producing more flexible mixture components at the output layer than simply point measures. This again allows us to halve the number of hidden units of the simpler construction. 

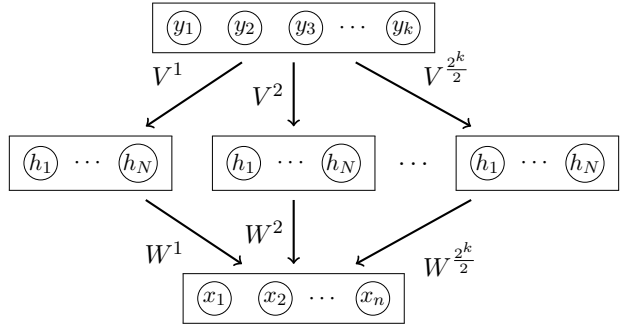
\begin{figure}
	\centering
	\tikzset{node distance=1.5cm, auto}
	\begin{tikzpicture}[scale=0.9, every node/.style={transform shape}]
	\node (L-0) at (0,0) {\unitlayer{x}{2}{n}}; 
	\node (L-1a) at (-3,2) {\unitlayer{h}{1}{N}}; 
	\node (L-1b) at (0,2) {\unitlayer{h}{1}{N}}; 
	\node (L-1d) at (1.8,2) {$\dots$}; 
	\node (L-1c) at (3.6,2) {\unitlayer{h}{1}{N}}; 
	\node (L-2) at (0,4) {\unitlayer{y}{3}{k}}; 
	
	\draw[<-, line width = .8pt] (L-0) to node {$W^1$} (L-1a);
	\draw[<-, line width = .8pt] (L-0) to node {$W^2$} (L-1b);
	\draw[<-, line width = .8pt] (L-0) to node [swap] {$W^{\frac{2^k}{2}}$} (L-1c);
	\draw[<-, line width = .8pt] (L-1a) to node {$V^1$}  (L-2);
	\draw[<-, line width = .8pt] (L-1b) to node {$V^2$}  (L-2);
	\draw[<-, line width = .8pt] (L-1c) to node [swap] {$V^{\frac{2^k}{2}}$}  (L-2);
	
	\end{tikzpicture}
	\caption{Illustration of the construction used in our proof. 
		Each block of hidden units is active on a distinct subset of possible inputs. 
		The output layer integrates the activities of the block that was activated by the input, 
		and produces corresponding activities of the output units. }
	\label{figure:proofidea}
\end{figure}

\section{The Number of Hidden Units for Fixed Weights in the Second Layer}
\label{section:proof1}

\subsection{The First Layer}
We start with the following lemma. 

\begin{lemma} 
	\label{lemma:firstlayer}
	Let $y',y''\in\{0,1\}^k$ differ only in one entry, and let $q',q''$ be any two distributions on $\{0,1\}$. 
	Then $F_{k,1}$ can approximate the following arbitrarily well: 
	\begin{equation*}
		p(\cdot|y) = 
		\left\{
		\begin{array}{l l}
			q', & \text{if $y=y'$}\\
			q'', & \text{if $y=y''$}\\
			\delta_{0}, & \text{else}
		\end{array}
		\right.. 
	\end{equation*}
\end{lemma}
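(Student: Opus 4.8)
The plan is to realize the target kernel as a limit of single-unit kernels in $\F_{k,1}$, each parametrized by a weight vector $v\in\R^k$ and a bias $c\in\R$, whose output-$1$ probability at input $y$ is $\sigma(v^\top y + c)$. Since a single stochastic unit produces a distribution on $\{0,1\}$ determined entirely by this one scalar, the whole problem reduces to choosing the affine pre-activation $a(y):=v^\top y+c$ so that $\sigma(a(y'))\approx q'(1)$, $\sigma(a(y''))\approx q''(1)$, and $a(y)\to-\infty$ for every remaining $y$. The fact that $y'$ and $y''$ form an \emph{edge} of the cube is exactly what lets a single unit handle both: because they are adjacent vertices, I can select both of them simultaneously with one affine form.

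Write $i$ for the unique coordinate in which $y'$ and $y''$ differ, and assume without loss of generality $y'_i=0$, $y''_i=1$ (the opposite case is symmetric). First I would build an affine \emph{selector}
$$ g(y) := \sum_{j\neq i:\, y'_j=1}(y_j-1) \;-\; \sum_{j\neq i:\, y'_j=0} y_j, $$
which ignores coordinate $i$ and counts, with a minus sign, the number of coordinates $j\neq i$ on which $y$ disagrees with $y'$. Thus $g$ vanishes exactly on the edge $\{y',y''\}$ and satisfies $g(y)\leq -1$ for every other $y$. Next I would add a term supported on coordinate $i$ together with a constant, setting
$$ a_r(y) := r\,g(y) + \beta\, y_i + \gamma, $$
so that $a_r(y')=\gamma$ and $a_r(y'')=\beta+\gamma$ for every $r$. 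Choosing $\gamma=\sigma^{-1}(q'(1))$ and $\beta=\sigma^{-1}(q''(1))-\sigma^{-1}(q'(1))$ then makes the sigmoid values at $y'$ and $y''$ exactly $q'(1)$ and $q''(1)$. Since $a_r$ has the form $v^\top y+c$, it defines a genuine element of $\F_{k,1}$.

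It then remains to drive the other rows to $\delta_0$. Because $g(y)\leq -1$ off the edge, $a_r(y)=r\,g(y)+\beta y_i+\gamma\to-\infty$ as $r\to\infty$ for each such $y$, hence $\sigma(a_r(y))\to 0$ and the corresponding output distribution converges to $\delta_0$, while the rows at $y'$ and $y''$ stay fixed at $q'$ and $q''$. Letting $r\to\infty$ therefore yields the claimed arbitrarily good approximation.

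The one place needing care is when $q'$ or $q''$ is a point mass, since then $\sigma^{-1}$ is infinite and $\gamma$ or $\beta$ cannot be chosen finite. I would resolve this with a two-parameter limit: first approximate $q',q''$ by strictly positive distributions (for which the construction above is exact with finite $\gamma,\beta$), and only afterwards let $r\to\infty$. Concretely, for each accuracy $\epsilon>0$ I fix strictly positive distributions within $\epsilon$ of the targets, obtain finite constants $\gamma_\epsilon,\beta_\epsilon$, and then choose $r=r(\epsilon)$ large enough that every off-edge row is within $\epsilon$ of $\delta_0$; this is possible because off the edge the term $r\,g(y)\leq -r$ dominates the fixed shift $\beta_\epsilon y_i+\gamma_\epsilon$. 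The main obstacle is thus purely this ordering of limits, ensuring that the selector strength $r$ outgrows the possibly large constants used to place the boundary values.
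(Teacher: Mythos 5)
Your proof is correct and takes essentially the same route as the paper: your selector $g$ is an explicit construction of the supporting hyperplane of the edge $\{y',y''\}$ that the paper invokes abstractly, and your term $r\,g(y)+\beta y_i+\gamma$ with $\gamma=\sigma^{-1}(q'(1))$, $\beta=\sigma^{-1}(q''(1))-\sigma^{-1}(q'(1))$ matches the paper's choice $V=\alpha\tilde V+(s''-s')e_l$, $c=\alpha\tilde c+s'$ exactly. Your extra care with the point-mass case (where $\sigma^{-1}$ is infinite), handled by approximating with strictly positive distributions before sending $r\to\infty$, addresses a detail the paper leaves implicit but does not constitute a different approach.
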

\begin{proof}
	Given the input weights and bias, $V\in\R^{1\times k}$ and $c\in\R$, 
	for each input $y\in\{0,1\}^k$ the output probability is given by 
	\begin{equation}
		p(z=1|y) = \sigma(V y + c).
		\label{eq:output}
	\end{equation} 
	Since the two vectors $y',y''\in\{0,1\}^k$ differ only in one entry, they are the vertices of an edge $E$ of the $k$-dimensional unit cube. 
	Let $l\in[k]:=\{1,\ldots,k \}$ be the entry in which they differ, with $y'_l=0$ and $y''_l=1$. 
	Since $E$ is a ($1$-dimensional) face of the cube, there is a supporting hyperplane of $E$. 
	This means that there are $\tilde V\in\R^{1\times k}$ and $\tilde c\in\R$ with 
	$\tilde V y + \tilde c =0$ if $y\in E$, 
	and $\tilde V y + \tilde c < -1$ if $y\in\{0,1\}^k\setminus E$. 
	Let $s'= \sigma^{-1}(q'(z=1))$ and $s''=\sigma^{-1}(q''(z=1))$. 
	We define $c=\alpha \tilde c + s'$ and $V=\alpha\tilde V + (s'' - s')e_l$. 
	Then, as $\alpha\to\infty$, 
	\begin{equation*}
		V y +c = 
		\left\{
		\begin{array}{l l}
			s', & \text{if $y=y'$}\\
			s'', & \text{if $y=y''$}\\
			-\infty, & \text{else}
		\end{array}
		\right.. 
	\end{equation*}  
	Plugging this into~\eqref{eq:output} proves the claim. 
\end{proof}

Given any binary vector $y\in\{0,1\}^k$, 
let $\dec(y):=\sum_{i=1}^k 2^{i-1}y_i$ be its integer representation. 
Using the previous lemma, we obtain the following.

\begin{proposition}
	\label{proposition:firstla}
	Let $N\geq 1$ and $m=2^{k-1} N$. 
	For each $y\in\{0,1\}^k$, let $p(\cdot|y)$ be an arbitrary distribution from $\Ecal_{N}$. 
	The model $\F_{k,m}$ can approximate the following kernel from $\Delta_{k,m}$ arbitrarily well: 
	\begin{align*}
		P(h | y )  = &
		\delta_{\bo}(h^0 )   \cdots  
		\delta_{\bo}(h^{\lfloor\dec(y)/2\rfloor-1}) 
		p(h^{\lfloor\dec(y)/2\rfloor}|y) \\
		&
		\times
		\delta_{\bo}(h^{\lfloor\dec(y)/2\rfloor+1}) \cdots
		\delta_{\bo}(h^{2^{k-1} -1}), 
	\end{align*}
	where $h^i= (h_{Ni+1},\ldots, h_{N(i+1)})$ for all $i\in\{0,1,\ldots,2^{k-1}-1\}$. 
\end{proposition}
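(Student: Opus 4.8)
The plan is to build the kernel $P(h\mid y)$ described in the statement by partitioning the $m = 2^{k-1}N$ hidden units into $2^{k-1}$ consecutive blocks $h^0,\dots,h^{2^{k-1}-1}$ of $N$ units each, and constructing the input weights $V$ and biases $c$ so that, for a given input $y$, exactly the block indexed by $\lfloor\dec(y)/2\rfloor$ carries the desired product distribution $p(\cdot\mid y)\in\Ecal_N$ while every other block is driven (in the limit) to the point measure $\delta_{\bo}$. The key observation enabling this is that the map $y\mapsto\lfloor\dec(y)/2\rfloor$ depends only on the top $k-1$ bits of $y$, so the two inputs $y',y''$ that agree in all coordinates except the lowest bit $y_1$ are precisely the two inputs sharing a common active block. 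This is exactly the situation of Lemma~\ref{lemma:firstlayer}, and it is what lets us serve a pair of inputs with a single block of $N$ units, saving the factor of two.

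The main steps, in order, are as follows. First I would fix an index $i\in\{0,\dots,2^{k-1}-1\}$ and identify the (at most two) inputs $y',y''$ with $\lfloor\dec(y')/2\rfloor=\lfloor\dec(y'')/2\rfloor=i$; these differ only in the first coordinate, hence are the endpoints of an edge of the cube as required by Lemma~\ref{lemma:firstlayer}. Second, for each of the $N$ units in block $h^i$, I would apply Lemma~\ref{lemma:firstlayer} with the pair $(y',y'')$ to approximate a single-unit kernel that equals the prescribed output-$1$ probability at $y'$ and at $y''$ and equals $\delta_0$ at all other inputs. Here I use that $p(\cdot\mid y')$ and $p(\cdot\mid y'')$ lie in $\Ecal_N$, so each factorizes over the $N$ units, meaning the $j$-th unit's target marginals $q'_j$ and $q''_j$ are genuine distributions on $\{0,1\}$ to which the lemma applies. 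Third, since the units within a layer are conditionally independent given the input and each block's parameters act on disjoint units, the product over the $N$ units of block $i$ reproduces $p(h^i\mid y')$ at $y'$ and $p(h^i\mid y'')$ at $y''$, and the product $\delta_{\bo}(h^i)=\prod_j\delta_0$ at every other input. Fourth, taking the product over all $2^{k-1}$ blocks yields exactly the stated kernel: for input $y$, the block $i=\lfloor\dec(y)/2\rfloor$ emits $p(\cdot\mid y)$ and all other blocks emit $\delta_{\bo}$.

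The one point demanding care is that Lemma~\ref{lemma:firstlayer} provides only an \emph{approximation} obtained as a limit $\alpha\to\infty$ of the parameter in each unit, whereas the statement requires the full product kernel over all $m$ units to be approximated arbitrarily well simultaneously. The main obstacle is therefore to check that the approximations compose: I would argue that since $\F_{k,m}$ is defined by free, independent parameters on disjoint groups of units, a choice of parameters realizing each single-unit approximation to within any desired tolerance realizes the product to within a corresponding tolerance, because the map from the $m$ tuples of single-unit output probabilities to the product kernel is (uniformly) continuous on the compact set $[0,1]^m$. In particular one can drive all the $\alpha$'s to infinity together, or interleave the limits, and the joint kernel converges entrywise to the target; entrywise convergence of probability vectors on a finite set is exactly convergence in the Euclidean topology, which is what universal approximation requires. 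With that continuity argument in place the proposition follows directly from $2^{k-1}$-fold application of Lemma~\ref{lemma:firstlayer}.
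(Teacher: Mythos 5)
Your proposal is correct and follows essentially the same route as the paper: pair up inputs with successive decimal values (which differ only in the lowest bit), apply Lemma~\ref{lemma:firstlayer} unit-by-unit within each block using the marginals of the factorizing target distributions, and take products over blocks. Your additional continuity argument for composing the single-unit limits is a detail the paper leaves implicit, but it does not change the approach.
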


\begin{proof}
	We divide the set $\{0,1\}^k$ of all possible inputs into $2^{k-1}$ disjoint pairs with successive decimal values. 
	The $i$-th pair consists of the two vectors $y$ with $\lfloor \dec(y)/2 \rfloor = i$, for all $i\in\{0,\ldots,2^{k-1}-1 \}$. 
	The kernel $P$ has the property that, for the $i$-th input pair, 
	all output units are inactive with probability one, 
	except those with index $Ni+1, \ldots, N(i+1)$. 
	Given a joint distribution $q$ on the states of $I$ units, 
	let $q_{j}$ denote the corresponding marginal distribution on the states of the $j$-th of these units. 
	By Lemma~\ref{lemma:firstlayer}, we can set 
	\begin{equation*}
		P_{Ni+j}(\cdot|y) = 
		\left\{
		\begin{array}{l l}
			p_j(\cdot|y) , & \text{if $\dec(y)= 2i  $}\\
			p_j(\cdot|y) , & \text{if $\dec(y)= 2i+1$}\\
			\delta_{0}, & \text{else}
		\end{array}
		\right.  
	\end{equation*}
	for all $i\in\{0,\ldots,2^{k-1}-1 \}$ and $j\in[N]$. 
\end{proof}

\subsection{The Second Layer}

\begin{figure}
	\centering
	\includegraphics{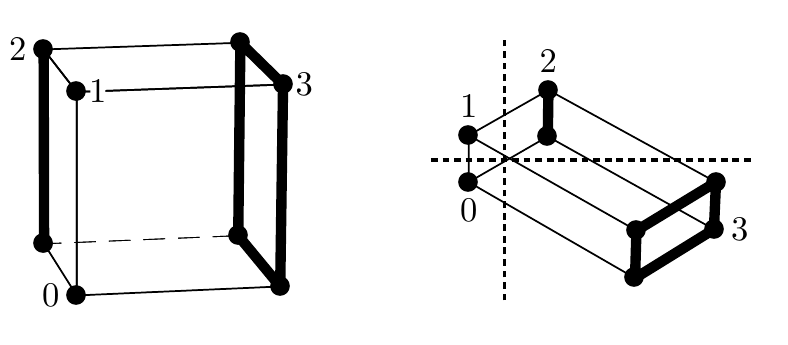}
	\caption{ %
		Illustration of Lemma~\ref{lemma:geometricclassification} for $n=2$, $N=2^n-1 = 3$. 
		There is an arrangement of $n$ hyperplanes which divides the vertices of a $(2^n-1)$-dimensional cube as $0|1|2,3|4,5,6,7|\cdots|2^{2^n-2},\ldots,2^{2^n-1}-1$. 
	}
	\label{figure:geometricclassification}
\end{figure}

For the second layer we will consider deterministic kernels. 
Given a binary vector $z$, 
let $l(z):=\lceil \log_2(\dec(z)+1)\rceil$ denote the largest $j$ with $z_j=1$. 
Here we set $l(0,\ldots,0)=0$. 
Given an integer $l\in\{0,\ldots, 2^n-1 \}$, let $\bin_n(l)$ denote the $n$-bit binary representation of $l$; 
that is, the vector with $\dec(\bin_n(l))=l$. 

\begin{lemma}
	\label{lemma:deterministicgeometric}
	Let $N=2^n-1$. The set $\F_{N,n}$ can approximate the following deterministic kernel arbitrarily well: 
	\begin{equation*}
		Q(\cdot | z) = \delta_{\bin_n l(z)}(\cdot) \quad \forall z\in\{0,1\}^N. 
	\end{equation*}
\end{lemma}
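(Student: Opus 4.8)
The plan is to realize $Q$ as a limit of kernels in $\F_{N,n}$ by driving each output unit to a linear threshold, thereby reducing the whole statement to showing that each output coordinate of $\bin_n(l(z))$ is a linearly separable (threshold) function of $z$. Recall that a kernel in $\F_{N,n}$ factorizes, so for weights $W\in\R^{n\times N}$ and bias $b\in\R^n$ the $i$-th output bit is an independent Bernoulli variable with $\Pr(x_i=1\mid z)=\sigma(w_i^\top z + b_i)$, where $w_i$ is the $i$-th row of $W$. Scaling $(W,b)\mapsto(rW,rb)$ and letting $r\to\infty$ sends $\sigma(r(w_i^\top z+b_i))$ to the indicator of $w_i^\top z + b_i>0$ at every $z$ for which $w_i^\top z+b_i\neq 0$, exactly as in the deterministic-limit discussion of Section~\ref{section:settings}. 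Hence it suffices to produce, for each $i\in[n]$, a vector $w_i$ and a bias $b_i$ with $w_i^\top z + b_i$ bounded away from $0$ and with $\sgn(w_i^\top z+b_i)$ equal to the $i$-th bit of $\bin_n(l(z))$ for all $z\in\{0,1\}^N$.

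The key idea is a super-increasing weighting that collapses the dependence onto the highest active coordinate. Recall that $l(z)=j$ exactly when $z_j=1$ and $z_{j'}=0$ for all $j'>j$ (with $l(\bo)=0$), so on this level set the value of $w_i^\top z$ is governed by its $j$-th term together with strictly lower-indexed terms. I would set $w_{i,j}=s_{i,j}\,3^{j}$ for $j\in[N]$, where $s_{i,j}=+1$ if the $i$-th bit of $\bin_n(j)$ is $1$ and $s_{i,j}=-1$ otherwise, and take $b_i=-\tfrac12$. The geometric bound $\sum_{j'=1}^{j-1}3^{j'}=\tfrac{3^{j}-3}{2}<\tfrac12\,3^{j}$ shows that for $z\neq\bo$ with $l(z)=j$ the top term $s_{i,j}3^{j}$ dominates the sum of all lower terms (the higher terms vanish), so $w_i^\top z$ has sign $s_{i,j}$ with $|w_i^\top z|\geq\tfrac12\,3^{j}\geq\tfrac32$; the small bias then leaves these classifications intact while forcing $w_i^\top z + b_i=-\tfrac12<0$ at $z=\bo$, matching the $i$-th bit of $\bin_n(0)$, which is $0$. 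Thus $\sgn(w_i^\top z+b_i)$ equals the $i$-th bit of $\bin_n(l(z))$ at every $z$, with all margins at least $\tfrac12$.

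Assembling the $n$ units and passing to the limit $r\to\infty$ then yields the product of point masses $\prod_{i}\delta_{(\bin_n l(z))_i}=\delta_{\bin_n l(z)}$ at each $z$, i.e. $Q\in\overline{\F_{N,n}}$, which is the claim. I expect the one genuinely nontrivial step to be the separability assertion of the second paragraph: the $i$-th binary digit of $\bin_n(l(z))$ looks like a parity-type function of the entries of $z$ and is not obviously a threshold function. The super-increasing weights are exactly what resolve this, by reducing the composite map ``take the highest set bit, then read off one digit of its binary code'' to a single sign test dominated by that highest bit.
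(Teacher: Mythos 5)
Your proof is correct and follows essentially the same route as the paper: reduce the claim to a sign condition $\sgn(w_i^\top z+b_i)=\sgn((\bin_n l(z))_i-\tfrac12)$ and realize it with super-increasing weights whose signs encode the bits of $\bin_n(j)$, then take the deterministic scaling limit. The paper's Lemma~\ref{lemma:geometricclassification} does exactly this with columns $2^{l+1}(\bin_n(l)-\tfrac12)$ (i.e.\ weights $\pm 2^l$, bias $-1$) instead of your $\pm 3^j$ and bias $-\tfrac12$, a purely cosmetic difference.
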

In words, the $z$-th row of $Q$ indicates the largest non-zero entry of the binary vector $z$. 
For example, for $n=2$ we have $N=3$ and 
\begin{equation*}
	Q = 
	\begin{blockarray}{ccccc}
		00 & 01 & 10 & 11 & \\
		\begin{block}{(cccc) c}
			1 &    &      &      & 000\\
			& 1 &      &      & 001\\
			&    & 1   &      & 010\\
			&    & 1   &      & 011\\
			&    &      &  1  & 100\\
			&    &      &  1 & 101\\
			&    &      &  1 & 110\\
			&    &      &  1 & 111\\
		\end{block}
	\end{blockarray}. 
\end{equation*}
\begin{proof}{\bf of Lemma~\ref{lemma:deterministicgeometric}}
	Given the input and bias weights, $W\in\R^{n\times N}$ and $b\in\R^{n}$, for each input $z\in\{0,1\}^N$ the output distribution is the product distribution $p(\cdot|z)\in\Ecal_n$ with exponential parameters $W z + b$. 
	If $\sgn(W z + b) = \sgn(x-\tfrac12)$ for some $x\in\{0,1\}^n$, 
	then the product distribution with parameters $\alpha(Wz+b)$, $\alpha\to\infty$ tends to $\delta_x$. 
	We only need to show that there is a choice of $W$ and $b$ with 
	$\sgn(W z +b) = \sgn(f(z) -\tfrac12)$, $f(z)= \bin_n \lceil \log_2(\dec(z)+1)\rceil$, for all $z\in\{0,1\}^N$. 
	That is precisely the statement of Lemma~\ref{lemma:geometricclassification}. 
\end{proof}

We used the following lemma in the proof of Lemma~\ref{lemma:deterministicgeometric}. 
For $l=0,1,\ldots, 2^n-1$, the $l$-th orthant of $\R^n$ is the set of all vectors $r\in\R^n$ with strictly positive or negative entries and $\dec(\hs(r)) = l$, where 
$\hs$ denotes the entry-wise Heaviside function, assigning value $0$ to negative entries and value $1$ to positive entries.

\begin{lemma}
	\label{lemma:geometricclassification}
	Let $N=2^n-1$. 
	There is an affine map $\{0,1\}^N\to \R^n$; $z\mapsto W z + b$, sending $\{z\in\{0,1\}^N\colon l(z)=l \}$ to the $l$-th orthant of $\R^n$, for all $l\in\{0,1,\ldots,N \}$. 
\end{lemma}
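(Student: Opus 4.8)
The plan is to treat the $n$ output coordinates separately, since membership in the $l$-th orthant is the conjunction of $n$ independent sign constraints. For $z$ with $l(z)=l$ we must place $Wz+b$ in the $l$-th orthant, i.e. we need $\sgn((Wz+b)_i)=\sgn((\bin_n(l))_i-\tfrac12)$ for every $i\in\{1,\ldots,n\}$. Introducing $s_{ij}:=2(\bin_n(j))_i-1\in\{-1,+1\}$ for the target sign of coordinate $i$ when the highest set bit sits at position $j$, the whole lemma reduces to constructing each row $W_i$ of $W$ and each entry $b_i$ so that, on every $z$ with $l(z)=l$, the $i$-th coordinate of $Wz+b$ has sign $s_{il}$ regardless of the lower bits $z_1,\ldots,z_{l-1}$.

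The key idea is to make the highest set bit dominate the affine form. I would set
\begin{equation*}
	W_{ij}=s_{ij}\,2^{j},\qquad b_i=-\tfrac12,\qquad i\in\{1,\dots,n\},\ j\in\{1,\dots,N\}.
\end{equation*}
Then for fixed $i$ and any $z$ with $l(z)=l\geq 1$ (so $z_l=1$ and $z_j=0$ for $j>l$) one has
\begin{equation*}
	(Wz+b)_i=s_{il}\,2^{l}+\sum_{j<l}s_{ij}\,2^{j}z_j-\tfrac12 .
\end{equation*}
The leading term has magnitude $2^{l}$, the lower-order terms contribute at most $\sum_{j<l}2^{j}=2^{l}-2$ in absolute value, and the bias contributes $\tfrac12$; since $2^{l}>(2^{l}-2)+\tfrac12$, the sign of $(Wz+b)_i$ equals $s_{il}$. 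In fact the value is bounded away from $0$ by at least $\tfrac32$, so $Wz+b$ lies strictly inside the $l$-th orthant, as the orthant definition requires. The case $l=0$ is immediate: then $z=\bo$, so $Wz+b=(-\tfrac12,\ldots,-\tfrac12)$, all entries are negative, matching $\bin_n(0)=\bo$.

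The only real work is the dominance estimate, and the single point to watch is the growth rate of the magnitudes $2^{j}$: it must outpace the cumulative contribution of all lower bits plus the bias, which is exactly the geometric-series bound $\sum_{j<l}2^{j}<2^{l}$ used above. Any strictly positive weights $w_j$ satisfying $w_l>\sum_{j<l}w_j+\tfrac12$ would serve equally well; the choice $w_j=2^{j}$ merely makes the inequality transparent. No case distinction among the coordinates $i$ is needed, since the per-position signs $s_{ij}$ are absorbed into $W$ and only the sign of the dominant term survives. I therefore expect the construction to be short, with the verification of the strict dominance inequality being the heart of the argument.
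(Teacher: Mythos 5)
Your proposal is correct and is essentially the paper's own construction: your weights $W_{ij}=s_{ij}2^{j}$ coincide exactly with the paper's choice of $l$-th column $2^{l+1}(\bin_n(l)-\tfrac12)$, and the only difference is the bias ($-\tfrac12$ instead of the paper's $-1$), both of which satisfy the same geometric-series dominance inequality. Your write-up additionally spells out the dominance estimate that the paper leaves implicit, which is a welcome bit of extra rigor but not a different argument.
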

\begin{proof}
	Consider the affine map 
	$z\mapsto Wz+b$, 
	where $b=-(1,\ldots,1)^\top$ and the $l$-th column of $W$ is $2^{l+1}(\bin_n(l)-\frac{1}{2})$ for all $l\in\{1,\ldots, N\}$. 
	For this choice, $\sgn(W z + b) = \sgn( \bin_n(l(z)) -\frac{1}{2} )$ lies in the $l$-th orthant of $\R^n$. 
\end{proof}

Lemma~\ref{lemma:geometricclassification} is illustrated in Fig.~\ref{figure:geometricclassification} for $n=2$. 
As another example, for $n=3$ the affine map can be defined as  
$z\mapsto W z + b$, where 
\begin{equation*}
	b =
	\begin{pmatrix}
		-1 \\ -1 \\ -1
	\end{pmatrix},
	\quad
	\text{and}
	\quad
	W=
	\left(
	\begin{array}{rrrrrrr}
		2 & -4 & 8   & -16 & 32  & -64 & 128\\
		-2 &   4 & 8   & -16 & -32& 64   & 128\\
		-2 & -4 & - 8&   16 & 32  & 64   & 128
	\end{array}
	\right).
\end{equation*}

\begin{proposition}
	\label{proposition:univdetmap}
	Let $N=2^n -1$ and let $Q$ be defined as in Lemma~\ref{lemma:deterministicgeometric}. 
	Then $Q\circ \Ecal_{N} = \Delta_{n}^+$. 
\end{proposition}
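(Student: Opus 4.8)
The plan is to reduce everything to the one–dimensional random variable $l(z)$. Because $Q$ is deterministic with $Q(\cdot|z)=\delta_{\bin_n l(z)}$, composing it with a product distribution $p\in\Ecal_N$ produces the distribution on $\{0,1\}^n$ whose mass at an output $x$ equals $\sum_{z\colon l(z)=\dec(x)}p(z)$. Since $\dec$ is a bijection from $\{0,1\}^n$ onto $\{0,1,\dots,N\}$ with $N=2^n-1$, this output distribution is nothing but the law of $l(z)$, relabeled through $\bin_n$. So it suffices to show that, as $p$ ranges over $\Ecal_N$, the law of $l(z)$ on $\{0,1,\dots,N\}$ ranges over exactly the strictly positive distributions.

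Next I would compute that law explicitly. Writing $q_j=p(z_j=1)\in(0,1)$ for the marginals of $p\in\Ecal_N$, the event $\{l(z)=l\}$ is exactly $\{z_l=1,\ z_j=0\ \forall j>l\}$ for $l\ge 1$, and $\{z_j=0\ \forall j\}$ for $l=0$; independence then gives $P_l:=\Pr_p[l(z)=l]=q_l\prod_{j>l}(1-q_j)$ and $P_0=\prod_j(1-q_j)$. Introducing the tail products $T_l:=\prod_{j>l}(1-q_j)$, so that $T_N=1$, the recursion $T_{l-1}=(1-q_l)T_l$ yields the telescoping identity $P_l=T_l-T_{l-1}$, whence $T_l=\sum_{j\le l}P_j$ is simply the cumulative distribution function of $l(z)$.

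Finally I would invert this relation to obtain surjectivity. Given any target $\pi=(\pi_0,\dots,\pi_N)\in\Delta_n^+$, I set the partial sums $T_l:=\sum_{j\le l}\pi_j$ and recover $q_l:=\pi_l/T_l=1-T_{l-1}/T_l$. Because every $\pi_j>0$, the $T_l$ are strictly increasing and lie in $(0,1]$, which forces each $q_l\in(0,1)$; the product distribution with these marginals therefore lies in $\Ecal_N$ and realizes $\pi$. The reverse inclusion is immediate: for any $p\in\Ecal_N$ the marginals $q_j$ are strictly inside $(0,1)$, so every $P_l$ is strictly positive and $Q\circ p\in\Delta_n^+$. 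Combining the two inclusions gives $Q\circ\Ecal_N=\Delta_n^+$.

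I do not expect a serious obstacle here; the whole argument hinges on recognizing the telescoping structure $P_l=T_l-T_{l-1}$, which turns the product parametrization of $\Ecal_N$ into the cumulative distribution of $l(z)$ and makes the inversion fully explicit. The only point requiring a line of care is verifying that the recovered $q_l$ stay strictly inside $(0,1)$; this is precisely where the strict positivity of $\pi$ (i.e.\ landing in $\Delta_n^+$ rather than all of $\Delta_n$) enters, and it is consistent with the dimension count $N=\dim\Delta_n$, which already signals that the map should be a bijection onto the open simplex.
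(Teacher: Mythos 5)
Your proof is correct and takes essentially the same route as the paper: both reduce the problem to computing the law of $l(z)$ under a product distribution---obtaining the same factorized expression $\Pr(l(z)=l)=\Pr(z_l=1)\prod_{j>l}\Pr(z_j=0)$---and then invert this triangular parametrization onto the open simplex. The only difference is the final inversion step: the paper argues sequentially that the ratios $q_i/q_0$ can each be made arbitrary in $(0,\infty)$, whereas you give a closed-form inverse via the cumulative sums $T_l=\sum_{j\le l}\pi_j$, a slightly more explicit way of exploiting the same triangular structure.
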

\begin{proof}
	Consider a strictly positive product distribution $p\in \Ecal_N$ with $p(z)=\prod_i p_i^{1-z_i}(1-p_i)^{z_i}$ for all $z\in\{0,1\}^N$. 
	Then $p^\top Q\in\Delta_n$ is the vector $q=(q_0,q_1,\ldots, q_N)$ with entries 
	\begin{eqnarray*}
		q_i 
		&=& \sum_{z\colon l(z)=i}p(z)\\
		&=& \sum_{z_k, k<i} \Big( \prod_{k<i} p_k^{1-z_k}(1-p_k)^{z_k}\Big) (1-p_i) \Big( \prod_{j>i}p_j\Big)\\
		&=& (1-p_i) \prod_{j>i}p_{j}
	\end{eqnarray*}
	for all $i=1,\ldots, N$, and $q_0=\prod_{j>0}p_j$. 
	Therefore, 
	\begin{equation}
		\frac{q_i}{q_0} = \frac{1-p_i}{p_i} \frac{1}{\prod_{j=1}^{i-1}p_j} \quad\forall i=1,\ldots,N. 
		\label{eq:quotients}
	\end{equation}
	Since $\frac{1-p_i}{p_i}$ can be made arbitrary in $(0,\infty)$ by choosing an appropriate $p_i$, 
	independently of $p_j$, for $j<i$,  
	the quotient $\frac{q_i}{q_0}$ can be made arbitrary in $(0,\infty)$ for all $i\in\{1,\ldots, N\}$. This implies that $q$ can be made arbitrary in $\Delta_n^+$. 
\end{proof}

\begin{proof}{\bf of Theorem~\ref{theorem:first}}
	This follows from Proposition~\ref{proposition:firstla} and Proposition~\ref{proposition:univdetmap}. 
\end{proof}

\section{The Number of Hidden Units for Changeable Weights in the Second Layer}
\label{section:proof2}

In order to prove Theorem~\ref{theorem:second} we will use the same construction of the first layer as in the previous section. 
For the second layer we will use the following refinement of Lemma~\ref{lemma:deterministicgeometric}. 

\begin{lemma}
	\label{lemma:deterministicgeometric2}
	Let $n\geq 2$ and $N=2^{n-1}-1$. 
	The set $\F_{N,n}$ can approximate the following kernels arbitrarily well: 
	\begin{equation*}
		Q(\cdot | z) = \lambda_z \delta_{\bin_n 2 l(z) }(\cdot) + (1-\lambda_z)\delta_{\bin_n 2 l(z) + 1}(\cdot) \quad \forall z\in\{0,1\}^N, 
	\end{equation*}
	where $\lambda_z$ are certain (not mutually independent) weights in $[0,1]$. 
	Given any $r_l\in\R_+$ for all $l\in\{0,1, \ldots, N \}$, 
	it is possible to choose the $\lambda_z$'s such that 
	\begin{equation*}
		\frac{\sum_{z\colon l(z) = l} \lambda_z}
		{\sum_{z\colon l(z) = l} ( 1 - \lambda_z)} = r_l\quad\forall l\in\{0,1,\ldots, N  \}. 
	\end{equation*}  
\end{lemma}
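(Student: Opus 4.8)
The plan is to realize the kernel by letting the $n-1$ high-order output units behave deterministically while the single low-order output unit remains genuinely stochastic. The starting observation is that $\bin_n(2l)$ and $\bin_n(2l+1)$ share the same higher $n-1$ bits, namely $\bin_{n-1}(l)$, and differ only in the least significant bit $x_1$. Since $N=2^{n-1}-1$, I would apply Lemma~\ref{lemma:geometricclassification} with $n-1$ in place of $n$ to obtain an affine map $z\mapsto W'z+b'$ sending $\{z\colon l(z)=l\}$ into the $l$-th orthant of $\R^{n-1}$; assigning these weights to output units $2,\ldots,n$ and scaling them by $\alpha\to\infty$, exactly as in the proof of Lemma~\ref{lemma:deterministicgeometric}, drives those units to the deterministic value $\bin_{n-1}(l(z))$. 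The first output unit is kept with finite weights $w_1\in\R^N$ and bias $b_1$, so that in the limit it outputs $1$ with probability $\sigma(w_1^\top z + b_1)$, independently of the others. Because the layer factorizes over units, the resulting kernel is exactly $\lambda_z\,\delta_{\bin_n 2l(z)}(\cdot)+(1-\lambda_z)\,\delta_{\bin_n 2l(z)+1}(\cdot)$ with $\lambda_z = 1-\sigma(w_1^\top z + b_1)$.

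It then remains to choose $w_1,b_1$ so that the prescribed ratios hold. Writing $S_l := \sum_{z\colon l(z)=l}\sigma(w_1^\top z + b_1)=\sum_{z\colon l(z)=l}(1-\lambda_z)$ and noting that $\sum_{z\colon l(z)=l}\lambda_z = |\{z\colon l(z)=l\}|-S_l$, the target condition is equivalent to $S_l=c_l$ with $c_l := |\{z\colon l(z)=l\}|/(1+r_l)$. Since $|\{z\colon l(z)=l\}|=2^{l-1}$ for $l\geq 1$ and equals $1$ for $l=0$, and $r_l\in\R_+$, each target $c_l$ lies strictly inside its admissible range $(0,2^{l-1})$ (resp.\ $(0,1)$). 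So the whole lemma reduces to hitting the $N+1$ partial-sum targets $c_0,\ldots,c_N$ using the $N+1$ free parameters $b_1,w_{1,1},\ldots,w_{1,N}$, matching the degrees of freedom to the constraints.

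The key observation, and the crux of the argument, is that these targets can be met one at a time by a lower-triangular recursion. For $z$ with $l(z)=l$ one has $z_l=1$, $z_{l+1}=\cdots=z_N=0$, and $z_1,\ldots,z_{l-1}$ free, so the preactivation equals $b_1+w_{1,l}+\sum_{j<l}w_{1,j}z_j$, whence $S_l$ depends on $w_{1,l}$, on the lower weights $w_{1,1},\ldots,w_{1,l-1}$, and on $b_1$, but never on the higher weights $w_{1,j}$ with $j>l$. I would therefore fix $b_1$ to meet $S_0=\sigma(b_1)=c_0$, then solve for $w_{1,1},w_{1,2},\ldots,w_{1,N}$ in this order. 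At stage $l$, with all lower parameters already fixed, $S_l$ is a sum of $2^{l-1}$ sigmoids each continuous and strictly increasing in $w_{1,l}$, so $S_l$ sweeps continuously and strictly monotonically over $(0,2^{l-1})$ as $w_{1,l}$ ranges over $\R$, and the intermediate value theorem supplies the unique $w_{1,l}$ with $S_l=c_l$. The only genuine obstacle is precisely the coupling of the $\lambda_z$ through a single affine form: a naive attempt to make $\sigma(w_1^\top z + b_1)$ constant on each level set fails, and it is this lower-triangular dependence of the level-$l$ sum on the weights that makes the per-level ratios independently controllable despite the coupling.
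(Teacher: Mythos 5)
Your proposal is correct and follows essentially the same route as the paper: append a single finite-weight output unit (the least significant bit) on top of the deterministic construction of Lemma~\ref{lemma:deterministicgeometric} applied with $n-1$ in place of $n$, and tune its bias and weights sequentially, exploiting the fact that the level-$l$ sums depend only on $b_1,w_{1,1},\ldots,w_{1,l}$. Your reformulation of the ratio conditions as partial-sum targets $S_l=c_l$ with a strict-monotonicity/IVT argument is a slightly more explicit rendering of the paper's continuity argument for the $\mu_l$'s, but the underlying idea is identical.
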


In words, the $z$-th row of $Q$ is a convex combination of the indicators of $2l(z)$ and $2l(z)+1$, 
and, furthermore, the total weight assigned to $2l$ relative to the total weight assigned to $2l+1$ can be made arbitrary for each $l$. 
For example, for $n=2$ we have $N=1$ and 
\begin{equation*}
	Q = 
	\begin{blockarray}{ccccc}
		00 & 01 & 10 & 11 & \\
		\begin{block}{(cccc) c}
			\lambda_0 & (1-\lambda_0)&      		&      & 0\\
			&					    &\lambda_1& (1-\lambda_1)     & 1\\
		\end{block}
	\end{blockarray}. 
\end{equation*}
The sum of all weights in a given even column can be made arbitrary, relative to the sum of all weights in the column right next to it, for all $N+1$ such pairs of columns simultaneously. 
\begin{proof}{\bf of Lemma~\ref{lemma:deterministicgeometric2}}
	Consider the sets $Z_l=\{z\in\{0,1\}^N\colon l(z)=l \}$, for $l=0,1,\ldots, N$. 
	Let $W'\in\R^{(n-1)\times N}$ and $b'\in\R^{n-1}$ be the input weights and biases defined in Lemma~\ref{lemma:deterministicgeometric}. 
	We define $W$ and $b$ by appending a row $(\mu_1,\ldots,\mu_N )$ on top of $W'$ and an entry $\mu_0$ on top of $b'$. 
	
	If $\mu_j<0$ for all $j=0,1,\ldots, N$, then $z\mapsto W z + b$ maps $Z_l$ to the $2l$-th orthant of $\R^n$, for each $l=0,1,\ldots, N$. 
	
	Consider now some arbitrary fixed choice of $\mu_j$, $j< l$. 
	Choosing $\mu_l<0$ with $|\mu_l|>\sum_{j<l}|\mu_l|$, $Z_l$ is mapped to the $2l$-th orthant. 
	If $\mu_l \to -\infty$, then $\lambda_z \to 1$ for all $z$ with $l(z)=l$. 
	As we increase $\mu_l$ to a sufficiently large positive value, the elements of $Z_l$ gradually are mapped to the $(2l + 1)$-th orthant. 
	If $\mu_l\to\infty$, then $(1 - \lambda_z)\to 1$ for all $z$ with $l(z)=l$. 
	By continuity, there is a choice of $\mu_l$ such that $\frac{\sum_{z\colon l(z) =l} \lambda_z}{ \sum_{z\colon l(z) = l} (1-\lambda_z)} = r_l$. 
	
	Note that the images of $Z_j$, $j<l$, are independent of the $i$-th rows of $W$ for all $i=l,\ldots, N$. 
	Hence changing $\mu_l$ does not have any influence on the images of $Z_l$ nor on $\lambda_z$ for $z\colon l(z)<l$. 
	Tuning $\mu_i$ sequentially, starting with $i=0$, we obtain a kernel that approximates any $Q$ of the claimed form arbitrarily well. 
\end{proof}

Let $\Q_n^N$ be the collection of kernels described in Lemma~\ref{lemma:deterministicgeometric2}. 
\begin{proposition}
	\label{proposition:sendod}
	Let $n\geq 2$ and $N=2^{n-1} -1$. 
	Then $\Q_n^N\circ \Ecal_{N} = \Delta_{n}^+$. 
\end{proposition}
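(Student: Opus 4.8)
The plan is to reuse the first-layer mass computation from the proof of Proposition~\ref{proposition:univdetmap} to control how much probability a product distribution places on each level set $Z_l = \{z\in\{0,1\}^N\colon l(z)=l\}$, and then to use the weights $\lambda_z$ furnished by Lemma~\ref{lemma:deterministicgeometric2} to split that mass between the two output vertices $\bin_n(2l)$ and $\bin_n(2l+1)$. Concretely, fix $p\in\Ecal_N$ and $Q\in\Q_n^N$. Since $Q(\cdot|z)$ is supported on $\{\bin_n 2l(z), \bin_n(2l(z)+1)\}$, the vector $p^\top Q$ has entries
\begin{equation*}
	(p^\top Q)(\bin_n(2l)) = \sum_{z\in Z_l} p(z)\lambda_z =: a_l, \qquad (p^\top Q)(\bin_n(2l+1)) = \sum_{z\in Z_l}p(z)(1-\lambda_z) =: b_l,
\end{equation*}
for $l=0,1,\ldots,N$. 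As $l$ ranges over $\{0,\ldots,N=2^{n-1}-1\}$ the indices $2l$ and $2l+1$ run through all of $\{0,\ldots,2^n-1\}$, so these $2^n$ numbers are exactly the coordinates of $p^\top Q$. The total mass on the $l$-th pair is $a_l+b_l = m_l := \sum_{z\in Z_l}p(z)$, which depends only on the first-layer distribution $p$ and not on the second-layer weights entering $Q$.

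Let $\tau\in\Delta_n^+$ be an arbitrary target, and set $T_l := \tau(\bin_n(2l))+\tau(\bin_n(2l+1))>0$; these sum to $1$, so $(T_0,\ldots,T_N)$ is a strictly positive distribution on $\{0,\ldots,N\}$. First I would choose $p$. The masses $m_l$ are exactly the quantities $q_l$ computed in the proof of Proposition~\ref{proposition:univdetmap} (that calculation is insensitive to the value of $N$), so by \eqref{eq:quotients} the ratios $m_l/m_0$ can be made arbitrary in $(0,\infty)$ independently, and hence $(m_0,\ldots,m_N)$ can be made equal to any strictly positive distribution. I fix such a $p\in\Ecal_N$ with $m_l = T_l$ for all $l$.

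With $p$, and hence every $m_l$, now frozen, the remaining task is to realise the splits $a_l = \tau(\bin_n(2l))$, equivalently $a_l/m_l = \tau(\bin_n(2l))/T_l \in (0,1)$. Here I invoke the mechanism in the proof of Lemma~\ref{lemma:deterministicgeometric2}: the extra row entry $\mu_l$ controls the weights $\lambda_z$ of all $z\in Z_l$ simultaneously, driving them to $1$ as $\mu_l\to-\infty$ and to $0$ as $\mu_l\to+\infty$, while leaving the images of the lower level sets $Z_j$, $j<l$, untouched. Consequently the $p$-weighted fraction $a_l/m_l$ is a continuous function of $\mu_l$ running from $1$ down to $0$, so by the intermediate value theorem some finite $\mu_l$ gives $a_l/m_l = \tau(\bin_n(2l))/T_l$, whence $b_l = m_l - a_l = \tau(\bin_n(2l+1))$. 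Tuning $\mu_0,\mu_1,\ldots,\mu_N$ in this order, no later adjustment disturbs an earlier level, and we obtain $p^\top Q = \tau$. This proves $\Delta_n^+\subseteq \Q_n^N\circ\Ecal_N$; the reverse inclusion holds because $p(z)>0$ and $\lambda_z\in(0,1)$ force every coordinate of $p^\top Q$ to be positive.

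The step I expect to be the main obstacle is the transition from Lemma~\ref{lemma:deterministicgeometric2} to the splits actually needed here: the lemma controls the \emph{unweighted} ratio $\sum_{z\in Z_l}\lambda_z / \sum_{z\in Z_l}(1-\lambda_z)$, whereas the composition forces me to control the $p$-\emph{weighted} ratio $a_l/b_l$. The resolution, which I would make explicit, is that the same monotone sweep of $\mu_l$ drives the weighted quantity continuously between the same two extremes $1$ and $0$, so the intermediate value theorem applies verbatim; the only facts I genuinely need are continuity in $\mu_l$, the two limiting values, and the independence of distinct levels, all already established for Lemma~\ref{lemma:deterministicgeometric2}. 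I would also stress the decoupling that legitimises the two-stage argument: the pair masses $m_l$ are functions of the first-layer parameters alone, while the splits are functions of the second-layer parameters alone, so fixing $p$ in the first stage does not constrain the splits achievable in the second.
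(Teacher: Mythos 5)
Your proposal is correct and takes essentially the same route as the paper: the pair masses $q_{2i}+q_{2i+1}$ are controlled by the first-layer product distribution exactly as in Proposition~\ref{proposition:univdetmap}, and the within-pair splits are controlled by continuously sweeping the $\mu_l$'s from the proof of Lemma~\ref{lemma:deterministicgeometric2}, tuned sequentially using the independence of distinct levels. The weighted-versus-unweighted ratio subtlety you flag is resolved in the paper the same way you resolve it --- its proof also appeals to the continuous, simultaneous transition of all $\lambda_z$ with $l(z)=i$ from $0$ to $1$ (i.e., to the lemma's mechanism rather than only its statement).
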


\begin{proof}
	Consider a strictly positive product distribution $p\in \Ecal_N$ with $p(z) = \prod_{i=1}^N p_i^{z_i-1}(1-p_i)^{z_i}$ for all $z\in\{0,1\}^N$. 
	Then $p^\top Q\in\Delta_n$ is a vector $(q_0,q_1,\ldots, q_{2N+1})$ whose entries satisfy 
	\begin{eqnarray*}
		q_{2i}+q_{2i+1} = (1-p_i) \prod_{j>i}p_{j}, 
	\end{eqnarray*}
	for all $i = 1, \ldots, N$ and $q_0 + q_1 = \prod_{j>i}p_j$. 
	As in the proof of Proposition~\ref{proposition:univdetmap}, this implies that the vector $(q_0+q_1, q_2+q_3,\ldots, q_{2N}+q_{2N+1})$ can be made arbitrary in $\Delta_{n-1}^+$. 
	This is irrespective of the coefficients $\lambda_0,\ldots, \lambda_N$. 
	Now all we need to show is that we can make $q_{2i}$ arbitrary relative to $q_{2i+1}$ for all $i=0,\ldots, N$. 
	
	We have 
	\begin{eqnarray*}
		q_{2i} 
		&=&\sum_{z\colon l(z)=i} \lambda_z p(z)\\
		&=& \left(\sum_{z\colon l(z)=i}\lambda_{z} \Big( \prod_{k<i} p_k^{1-z_k}(1-p_k)^{z_k}\Big)\right)\\
		& & \quad \times (1-p_i) \Big( \prod_{j>i}p_j\Big)
	\end{eqnarray*}
	and
	\begin{eqnarray*}
		q_{2i+1} 
		&=&\sum_{z\colon l(z)=i} (1 - \lambda_z) p(z)\\
		&=& \left(\sum_{z\colon l(z)=i}(1 -\lambda_{z}) \Big( \prod_{k<i} p_k^{1-z_k}(1-p_k)^{z_k}\Big)\right) \\
		& & \quad \times (1-p_i) \Big( \prod_{j>i}p_j\Big). 
	\end{eqnarray*}
	Therefore, 
	\begin{eqnarray*}
		\frac{q_{2i}}{q_{2i+1}} 
		&=& 
		\frac{ \sum_{z\colon l(z)=i}\lambda_{z} \Big( \prod_{k<i} p_k^{1-z_k}(1-p_k)^{z_k}\Big) }
		{\sum_{z\colon l(z)=i}(1 -\lambda_{z}) \Big( \prod_{k<i} p_k^{1-z_k}(1-p_k)^{z_k}\Big)}. 
	\end{eqnarray*}
	By Lemma~\ref{lemma:deterministicgeometric2} it is possible to choose all $\lambda_z$ arbitrarily close to zero for all $z$ with $l(z)=i$ and have them transition continuously to values arbitrarily close to one (independently of the values of $\lambda_z$, $z\colon l(z)\neq i$). 
	Since all $p_k$ are strictly positive, this implies that the quotient $\frac{q_{2i}}{q_{2i+1}}$ takes all values in $(0,\infty)$ as the $\lambda_z$, $z\colon l(z)=i$ transition from zero to one. 
\end{proof}

\begin{proof}{\bf of Theorem~\ref{theorem:second}}
	This follows from Proposition~\ref{proposition:firstla} and Proposition~\ref{proposition:sendod}. 
\end{proof}

\section{Conclusions}
\label{section:conclusion}

This article proves upper bounds on the minimal size of binary shallow stochastic feedforward networks with sigmoid activation probabilities that can approximate any stochastic function with a given number of binary inputs and outputs arbitrarily well. 
By our analysis, if all parameters of the network are free, $2^{k-1}(2^{n-1}-1)$ hidden units suffice, and, if only the parameters of the first layer are free, $2^{k-1}(2^n-1)$ hidden units suffice. 

It is interesting to compare these results with what is known about universal approximation of Markov kernels by shallow undirected stochastic networks, called conditional restricted Boltzmann machines. 
For those networks previous work~\citep{montufar2014expressive} has shown that $2^{k-1}(2^n-1)$ hidden units suffice, whereby, if the number $k$ of input units is large enough, $\frac14 2^{k}(2^n-1 + 1/30)$ suffice. 
These bounds are sandwiched between our bounds for feedforward networks. 
In the case of no input units, our bound $2^{n-1}-1$ equals the known bounds for universal approximation of probability distributions by restricted Boltzmann machines~\citep{Montufar2011}. 
Hence, 
given the current state of knowledge, if we were to specify a smallest possible universal approximator of Markov kernels or probability distributions, feedforward networks would seem preferable. 
However, verifying the tightness of the bounds appears to be a very challenging problem in either case. 
It has been observed that undirected networks can represent many kernels that can be represented by feedforward networks, especially when these are not too stochastic~\citep{montufar2014expressive,montufar2014deep}. 
In future work it would be interesting to compare the representational power of both network architectures in more detail. 

We think that it is possible to adapt our analysis to cover deep architectures as well. 
This should allow us to conclude that a multilayer feedforward stochastic network with $k$ input units and $n$ output units is a universal approximator of Markov kernels if it has about $2^{n-1}$ hidden layers, each containing about $n 2^{k-1}$ units.  The verification of this claim is left for future work. 
In relation with this, the results presented in this paper should be helpful for analyzing the relative representational power of shallow vs. deep stochastic feedforward networks, a topic that has attracted much interest in recent years and that still poses a great many questions.

\acks{I would like to thank the Santa Fe Institute, Santa Fe, NM, USA, and the RIKEN Brain Science Institute, Hirosawa, Saitama, Japan, for hosting me during the work on this article.}

\bibliography{referenzen}

\begin{thebibliography}{13}
\providecommand{\natexlab}[1]{#1}
\providecommand{\url}[1]{\texttt{#1}}
\expandafter\ifx\csname urlstyle\endcsname\relax
  \providecommand{\doi}[1]{doi: #1}\else
  \providecommand{\doi}{doi: \begingroup \urlstyle{rm}\Url}\fi

\bibitem[Barron(1993)]{Barron1993}
Andrew~R. Barron.
\newblock Universal approximation bounds for superpositions of a sigmoidal
  function.
\newblock \emph{Information Theory, IEEE Transactions on}, 39\penalty0
  (3):\penalty0 930--945, May 1993.
\newblock ISSN 0018-9448.
\newblock \doi{10.1109/18.256500}.

\bibitem[Chen and Chen(1995)]{ChenChen1995}
Tianping Chen and Hong Chen.
\newblock Universal approximation to nonlinear operators by neural networks
  with arbitrary activation functions and its application to dynamical systems.
\newblock \emph{Neural Networks, IEEE Transactions on}, 6\penalty0
  (4):\penalty0 911--917, Jul 1995.
\newblock ISSN 1045-9227.
\newblock \doi{10.1109/72.392253}.

\bibitem[Cybenko(1989)]{Cybenko}
George Cybenko.
\newblock Approximation by superpositions of a sigmoidal function.
\newblock \emph{Mathematics of Control, Signals and Systems}, 2\penalty0
  (4):\penalty0 303--314, 1989.
\newblock \doi{10.1007/BF02551274}.
\newblock URL \url{http://dx.doi.org/10.1007/BF02551274}.

\bibitem[Gallant and White(1988)]{GallantWhite1988}
A.~Ronald Gallant and Halbert White.
\newblock There exists a neural network that does not make avoidable mistakes.
\newblock In \emph{Neural Networks, 1988., IEEE International Conference on},
  pages 657--664 vol.1, July 1988.
\newblock \doi{10.1109/ICNN.1988.23903}.

\bibitem[Hornik(1991)]{Hornik:1991:ACM:109691.109700}
Kurt Hornik.
\newblock Approximation capabilities of multilayer feedforward networks.
\newblock \emph{Neural Networks}, 4\penalty0 (2):\penalty0 251--257, March
  1991.
\newblock ISSN 0893-6080.
\newblock \doi{10.1016/0893-6080(91)90009-T}.
\newblock URL \url{http://dx.doi.org/10.1016/0893-6080(91)90009-T}.

\bibitem[Hornik et~al.(1989)Hornik, Stinchcombe, and White]{Hornik89}
Kurt Hornik, Maxwell Stinchcombe, and Halbert White.
\newblock Multilayer feedforward networks are universal approximators.
\newblock \emph{Neural Networks}, 2\penalty0 (5):\penalty0 359 -- 366, 1989.
\newblock ISSN 0893-6080.
\newblock \doi{10.1016/0893-6080(89)90020-8}.
\newblock URL \url{http://dx.doi.org/10.1016/0893-6080(89)90020-8}.

\bibitem[Le~Roux and Bengio(2010)]{LeRoux:2010:DBN:1836204.1836214}
Nicolas Le~Roux and Yoshua Bengio.
\newblock Deep belief networks are compact universal approximators.
\newblock \emph{Neural Computation}, 22\penalty0 (8):\penalty0 2192--2207,
  August 2010.
\newblock ISSN 0899-7667.
\newblock \doi{10.1162/neco.2010.08-09-1081}.
\newblock URL \url{http://dx.doi.org/10.1162/neco.2010.08-09-1081}.

\bibitem[Leshno et~al.(1993)Leshno, Lin, Pinkus, and
  Schocken]{leshno1993multilayer}
Moshe Leshno, Vladimir~Ya Lin, Allan Pinkus, and Shimon Schocken.
\newblock Multilayer feedforward networks with a nonpolynomial activation
  function can approximate any function.
\newblock \emph{Neural networks}, 6\penalty0 (6):\penalty0 861--867, 1993.

\bibitem[Mont{\'u}far(2014)]{montufar2014deep}
Guido Mont{\'u}far.
\newblock Deep narrow {B}oltzmann machines are universal approximators.
\newblock \emph{arXiv preprint arXiv:1411.3784}, 2014.

\bibitem[Mont\'ufar and Ay(2011)]{Montufar2011}
Guido Mont\'ufar and Nihat Ay.
\newblock Refinements of universal approximation results for deep belief
  networks and restricted {B}oltzmann machines.
\newblock \emph{Neural Computation}, 23\penalty0 (5):\penalty0 1306--1319, May
  2011.
\newblock ISSN 0899-7667.
\newblock \doi{10.1162/NECO_a_00113}.
\newblock URL \url{http://dx.doi.org/10.1162/NECO_a_00113}.

\bibitem[Mont{\'u}far et~al.(2014)Mont{\'u}far, Ay, and
  Ghazi-Zahedi]{montufar2014expressive}
Guido Mont{\'u}far, Nihat Ay, and Keyan Ghazi-Zahedi.
\newblock Geometry and expressive power of conditional restricted {B}oltzmann
  machines for sensorimotor control.
\newblock \emph{arXiv preprint arXiv:1402.3346}, 2014.

\bibitem[Sutskever and Hinton(2008)]{Sutskever:2008}
Ilya Sutskever and Geoffrey~E. Hinton.
\newblock Deep, narrow sigmoid belief networks are universal approximators.
\newblock \emph{Neural Computation}, 20\penalty0 (11):\penalty0 2629--2636,
  November 2008.
\newblock ISSN 0899-7667.
\newblock \doi{10.1162/neco.2008.12-07-661}.
\newblock URL \url{http://dx.doi.org/10.1162/neco.2008.12-07-661}.

\bibitem[Wenzel et~al.(2000)Wenzel, Ay, and
  Pasemann]{Wenzel:2000:HAS:361159.361171}
Walter Wenzel, Nihat Ay, and Frank Pasemann.
\newblock Hyperplane arrangements separating arbitrary vertex classes in
  n-cubes.
\newblock \emph{Adv. Appl. Math.}, 25\penalty0 (3):\penalty0 284--306, 2000.

\end{thebibliography}

\end{document}